\documentclass[10pt,twocolumn,letterpaper]{article}

\usepackage{cvpr}
\usepackage{times}
\usepackage{epsfig}
\usepackage{graphicx}
\usepackage{amsmath}
\usepackage{amssymb}
\usepackage{algorithm}
\usepackage{algorithmic}
\usepackage[ruled,vlined,algo2e]{algorithm2e}

\usepackage{helvet} 
\usepackage{courier}  
\usepackage[numbers]{natbib}
\usepackage[utf8]{inputenc} 
\usepackage[T1]{fontenc}    

\usepackage{booktabs}       
\usepackage{amsfonts}       
\usepackage{nicefrac}       
\usepackage{microtype}      

\usepackage{epsfig}
\usepackage{amsmath}
\usepackage{amssymb}
\usepackage{adjustbox}

\usepackage{amsthm}  
\usepackage{wasysym}
\usepackage[ruled,vlined,algo2e]{algorithm2e}
\providecommand{\SetAlgoLined}{\SetLine}
\usepackage{color}
\usepackage{dsfont}	
\usepackage{wrapfig}
\usepackage{footnote}
\usepackage{epstopdf}
 \usepackage{enumitem}
\usepackage{subfigure}
\usepackage{caption}
\usepackage{comment}
\usepackage{multirow}
\usepackage{algorithm}
\usepackage{algorithmic}

\def\eg{\emph{e.g., }}
\def\ie{\emph{i.e., }}

\def\vs{\emph{vs. }}
\def\wrt{\emph{w.r.t. }}
\def\etal{\emph{et al. }}

\usepackage{pifont}

\DeclareMathOperator*{\argmin}{arg\,min}

\usepackage{mathtools}

\makeatletter
\newcommand*{\rom}[1]{\expandafter\@slowromancap\romannumeral #1@}
\makeatother
\makeatletter
\newcommand\footnoteref[1]{\protected@xdef\@thefnmark{\ref{#1}}\@footnotemark}
\makeatother
\newcommand{\bfsection}[1]{\vspace*{0.1cm}\noindent\textbf{#1.}}

\newtheorem{prop}{Proposition}



\usepackage[pagebackref=true,breaklinks=true,letterpaper=true,colorlinks,bookmarks=false]{hyperref}

\cvprfinalcopy 


\ifcvprfinal\pagestyle{empty}\fi
\begin{document}

\title{Learning to Segment 3D Point Clouds in 2D Image Space}

\author{Yecheng Lyu\thanks{Part of this work was done when the author was an intern at Mitsubishi Electric Research Laboratories (MERL).} \hspace{1cm} Xinming Huang \hspace{1cm} Ziming Zhang \\
Worcester Polytechnic Institute\\
{\tt\small \{ylyu, xhuang, zzhang15\}@wpi.edu}
}

\maketitle

\begin{abstract}
In contrast to the literature where local patterns in 3D point clouds are captured by customized convolutional operators, in this paper we study the problem of how to effectively and efficiently project such point clouds into a 2D image space so that traditional 2D convolutional neural networks (CNNs) such as U-Net can be applied for segmentation. To this end, we are motivated by graph drawing and reformulate it as an integer programming problem to learn the topology-preserving graph-to-grid mapping for each individual point cloud. To accelerate the computation in practice, we further propose a novel hierarchical approximate algorithm. With the help of the Delaunay triangulation for graph construction from point clouds and a multi-scale U-Net for segmentation, we manage to demonstrate the state-of-the-art performance on ShapeNet and PartNet, respectively, with significant improvement over the literature. Code is available at \url{https://github.com/Zhang-VISLab}.

\end{abstract}

\section{Introduction}
Recently point cloud processing has been attracting more and more attention \cite{qi2017pointnet,Pham_2019_JSIS3D,hua2018pointwise,qi2017pointnet++,Duan_2019_SRN,wang2018sgpn,Liu_2019_RS_CNN,Komarichev_2019_A_CNN,Thomas_2019_KPConv,li2018so_net,Wu_2019_PointConv,li2018pointcnn,Zhao_2019_PointWeb,Zhang2019ShellNet,Liu_2019_DensePoint,Yu_2019_PartNet,huang2018recurrent,Meng_2019_VV_Net,le2018pointgrid,Mao_2019_Interpolated,wang2017cnn,Lei_2019_Octree,su2018splatnet,rao2019spherical,yi2017syncspeccnn}. As a fundamental data structure to store the geometric features, a point cloud saves the 3D positions of points scanned from the physical world as an orderless list. In contrast, images have regular patterns on 2D grid with well-organised pixels in local neighborhood. Such local regularity is beneficial for fast 2D convolution, leading to well-designed convolutional neural networks (CNNs) such as FCN \cite{long2015fully}, GoogleNet \cite{szegedy2015googlenet} and ResNet \cite{he2016resnet} that can efficiently and effectively extract local features from pixels to semantics with state-of-the-art performance for different applications.

\bfsection{Motivation}
In fact PointNet\footnote{For simplicity in our explanation, we assume no bias term in PointNet.} \cite{qi2017pointnet} for point cloud classification and segmentation can be re-interpreted from the perspective of CNN. In general, PointNet projects each 3D $(x,y,z)$-point into a higher dimensional feature space using a multilayer perceptron (MLP) and pools all the features from a cloud globally as a cloud signature for further usage. As an equivalent CNN implementation, one can construct an $(x,y,z)$-image with all the 3D points as the pixels in a random order and $(0,0,0)$ for the rest of the image, and apply $1\times1$ convolutional kernels sequentially to the image, followed by a global max-pooling operator. Different from conventional RGB images, here $(x,y,z)$-images define a new 2D image space with $x, y, z$ as channels. Same image representation has been explored in \cite{lyu2018real,lyu2018chipnet,milioto2019rangenet++,wu2017squeezeseg,wu2019squeezesegv2} for LiDAR points. Unlike CNNs, PointNet lacks of the ability of extracting local features that may limit its performance.

\begin{figure}[t]
	\centering
	\includegraphics[width=.95\columnwidth]{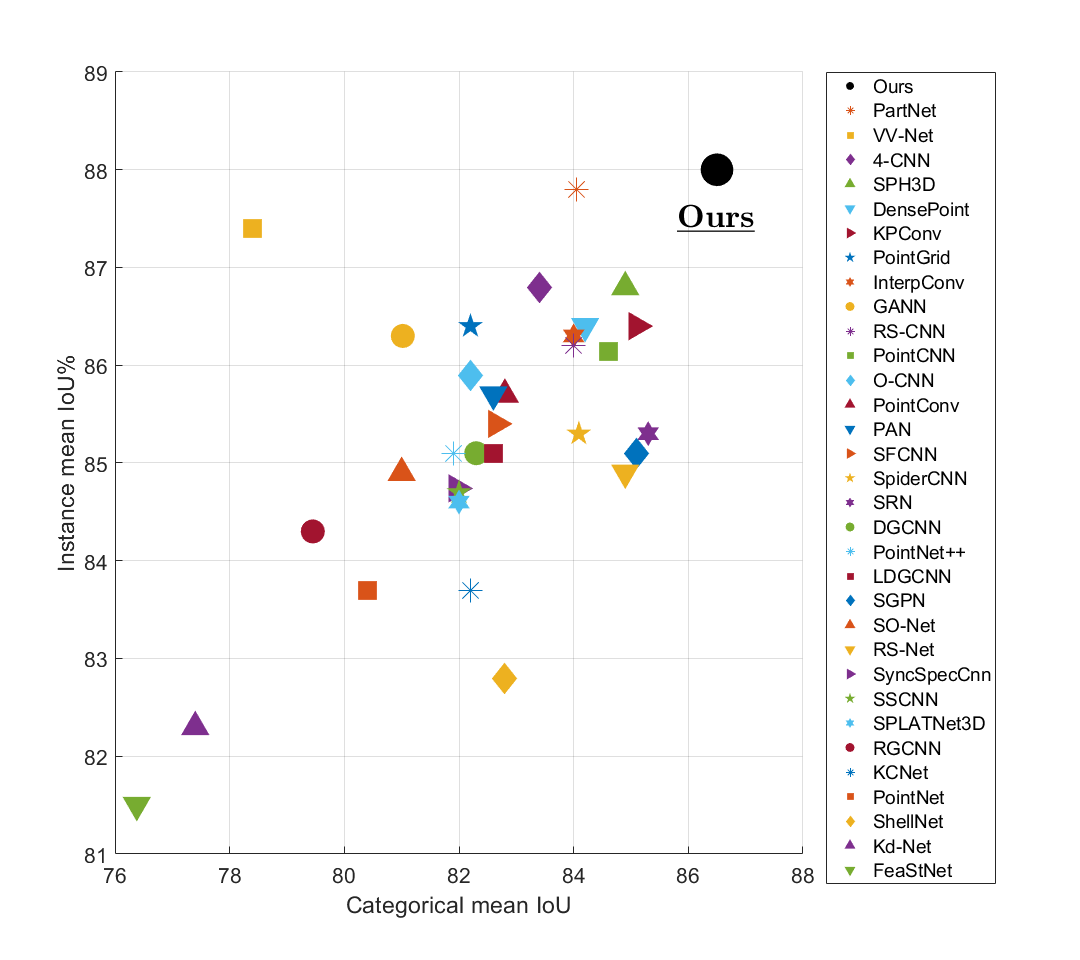}
	\vspace{-3mm}
	\caption{\footnotesize State-of-the-art part segmentation performance comparison on ShapeNet, where IoU denotes intersection-over-union.
	}
	\label{fig:ShapeNet}
	\vspace{-6mm}
\end{figure}

This observation inspires us to investigate whether in the literature there exists a state-of-the-art method that applies conventional 2D CNNs as backbone to image representations for 3D point cloud segmentation. Surprisingly, as we summarize in Table \ref{tab:component_comp}, we can only find a few, indicating that currently such integrated methods for point cloud segmentation may be significantly underestimated. Clearly the key challenge for developing such integrated methods is: 

\BlankLine
{\centering{\em How to effectively and efficiently project 3D point clouds into a 2D image space so that we can take advantage of local pattern extraction in conventional 2D CNNs for point cloud semantic segmentation?}\par}
\BlankLine

\bfsection{Approach}
The question above is nontrivial. A bad projection function can easily lead to the loss of structural information in a point cloud with, for instance, many point collisions in the image space. Such structural loss is fatal as it may introduce so much noise that the local patterns in the original cloud are completely changed, leading to poor performance even using 2D conventional CNNs. Therefore, a good point-to-image projection function is the key to bridge the gap between the point cloud inputs and 2D CNNs.

At the system level, our integrated method is as follows:
\setlist[enumerate]{leftmargin=12mm}
\begin{enumerate}[nosep]
    \item[{\bf Step 1.}] Construct graphs from point clouds.
    \item[{\bf Step 2.}] Project graphs into images using graph drawing.
    \item[{\bf Step 3.}] Segment points using U-Net.
\end{enumerate}

We are motivated by the graph visualization techniques in {\em graph drawing}, an area of mathematics and computer sciences whose goal is to present the nodes and edges of a graph on a plane with some specific properties \cite{chrobak1995linear,schnyder1990embedding,kamada1989algorithm,frishman2007multi}. Particularly the {\em Kamada-Kawai (KK)} algorithm \cite{kamada1989algorithm} is one of the most widely-used undirected graph visualization techniques. In general, the KK algorithm defines an objective function that measures the energy of each graph layout \wrt some graph distance, and searches for the (local) minimum that gives a reasonably good 2D visualization. Note that the KK algorithm works in a continuous 2D space, rather than 2D grid (\ie a discrete space).

Therefore, intuitively we propose an integer programming (IP) to enforce the KK algorithm to learn projections on 2D grid, leading to an NP-complete problem \cite{wolsey2014integer}. Considering that the computational complexity of the KK algorithm is at least $O(n^2)$ \cite{kobourov2012spring} with the number of nodes $n$ in a graph (\eg thousands of points in a cloud), it would be still too expensive to compute even if we relax the IP with rounding. 

In order to accelerate the computation in our approach, we follow the hierarchical strategy in \cite{fruchterman1991graph, meyerhenke2015drawing, jacomy2014forceatlas2} and further propose a novel hierarchical approximation with complexity of $O(n^{\frac{L+1}{L}})$, roughly speaking, where $L$ denotes the number of the levels in the hierarchy. In fact, such a hierarchical scheme can also help us reduce the complexity in graph construction from point clouds using Delaunay triangulation \cite{delaunay1934sphere} with worst-case complexity of $O(n^2)$ for 3D points \cite{amenta2007complexity}.

Once we learn the graph-to-grid projection for a point cloud, we accordingly generate an $(x,y,z)$-image by filling it in with 3D points and zeros. We further feed these image representations to a multi-scale U-Net \cite{ronneberger2015u} for segmentation.

\bfsection{Performance Preview}
To demonstrate how well our approach works, we summarize 32 state-of-the-art performance on a benchmark data set, ShapeNet \cite{yi2016ShapeNet}, in Fig. \ref{fig:ShapeNet} and compare ours with these results under the same training/testing protocols. Clearly our results are significantly better than all the others with large margins. Similar observations have been made on PartNet \cite{Yu_2019_PartNet} as well. Please refer to our experimental section for more details. 

\bfsection{Contributions}
In summary, our key contributions in this paper are as follows:
\setlist[itemize]{leftmargin=*}
\begin{itemize}[nosep]
\item We are the {\em first}, to the best of our knowledge, to explore the graph drawing algorithms in the context of learning 2D image representations for 3D point cloud segmentation.

\item We accordingly propose a novel hierarchical approximate algorithm that accounts for computation to map point clouds into image representations as well as preserving the local information among the points in each cloud.

\item We demonstrate the state-of-the-art performance on both ShapeNet and PartNet with significant improvement over the literature for 3D point cloud segmentation, using the integrated method of our graph drawing algorithm with the Delaunay triangulation and a multi-scale U-Net.
\end{itemize}

\section{Related Work}
Table \ref{tab:component_comp} summarizes some existing works. In particular,



\bfsection{Representations of 3D Point Clouds}
Voxels are popular choices because they can benefit from the efficient CNNs. PointGrid \cite{le2018pointgrid}, O-CNN \cite{wang2017cnn}, VV-Net \cite{Meng_2019_VV_Net} and InterpConv \cite{Mao_2019_Interpolated} sample a point cloud in volumetric grids and apply 3D CNNs. Some other works represent a point cloud in specific 2D domains and perform customized network operators \cite{su2018splatnet, rao2019spherical, yi2017syncspeccnn}. However, these works have difficulty in sampling from a non-uniformly distributed point cloud and result in a serious problem of point collisions. Graph-based approaches \cite{te2018rgcnn,Jiang_2019_Hierarchical,xu2018spidercnn,pan2019pointatrousnet,li2019graph,wang2019dynamic,shen2018mining,liang2019hierarchical,Wang_2019_Graph,landrieu2018large} construct graphs from point clouds for network processing by sampling all points as graph vertices. However, they often struggle in assigning edges between the graph vertices. There also exist some works \cite{qi2017pointnet,Pham_2019_JSIS3D,hua2018pointwise,qi2017pointnet++,Duan_2019_SRN,wang2018sgpn,Liu_2019_RS_CNN,Komarichev_2019_A_CNN,Thomas_2019_KPConv,li2018so_net,Wu_2019_PointConv,li2018pointcnn,Zhao_2019_PointWeb,Zhang2019ShellNet,Liu_2019_DensePoint,Yu_2019_PartNet} that directly use points as network inputs. Though they do not have to consider the sampling and local connections among the points, 
significant effort has been made to hierarchically partition and extract features from the local point sets. 
FoldingNet \cite{yang2018foldingnet} introduced 2D grid as a latent space, rather than the output space, to capture the geometry of a point cloud.

There are some works in the literature of light detection and ranging (LiDAR) point processing that utilize depth images \cite{silberman2012indoor} or $(x,y,z)$-images \cite{lyu2018real, lyu2018chipnet, milioto2019rangenet++,wu2017squeezeseg,wu2019squeezesegv2}  generated from LiDAR points for training networks. In Table~\ref{tab:component_comp} we summarize these works as well, even though they are not developed for point cloud segmentation.

\begin{table}[t]
	\caption{\footnotesize Summary of state-of-the-art methods for point segmentation.}
	\label{tab:component_comp}
	\vspace{-2mm}
	\adjustbox{width=1.0\columnwidth}{
		\centering
		\begin{tabular}{c|c|c|c|c}
			\toprule
			Method & Raw Data & \begin{tabular}[c]{@{}c@{}}Data-to-Input \\ Mapping\end{tabular}
			& \begin{tabular}[c]{@{}c@{}}Network \\Input\end{tabular}
			& \begin{tabular}[c]{@{}c@{}}Network \\ Architecture \end{tabular}
			\\
			\midrule
			\bf{Ours} & {\bf Point Cloud} & \bf{Graph Drawing} & \bf{(x,y,z)-Image} & \bf{Multi-scale U-Net} \\
			
			\midrule
			
			Lyu \etal \cite{lyu2018real} & LiDAR Frame & Sphere Mapping & (x,y,z,$\phi$,$\theta$,$\rho$,i)-Image & 2D CNN\\
			ChipNet \cite{lyu2018chipnet} & LiDAR Frame & Sphere Mapping & (x,y,z,$\phi$,$\theta$,$\rho$,i)-Image & 2D CNN\\
			LoDNN \cite{caltagirone2017fast} & LiDAR Frame & 2D Grid Samp. &  Statistics-Image & 2D CNN\\
			RangeNet++ \cite{milioto2019rangenet++} & LiDAR Frame & Sphere Mapping & (r,x,y,z,i)-Image & 2D CNN \\
			SqueezeSeg \cite{wu2017squeezeseg} & LiDAR Frame & Sphere Mapping & (x,y,z,i)-Image & 2D CNN \\
			SqueezeSegV2 \cite{wu2019squeezesegv2} & LiDAR Frame & Sphere Mapping & (x,y,z,i)-Image & 2D CNN \\			
			
			\midrule
			PointNet \cite{qi2017pointnet} & Point Cloud & - & Point & MLP\\
			JSIS3D \cite{Pham_2019_JSIS3D} & Point Cloud & - & Point & MT-PNet \\
			PCNN \cite{hua2018pointwise} & Point Cloud & - & Point & Pointwise Conv. \\
			PointNet ++ \cite{qi2017pointnet++} & Point Cloud & FPS & Point & MLP\\
			SRN \cite{Duan_2019_SRN} & Point Cloud & FPS & Point & SRN \\
			SGPN \cite{wang2018sgpn} & Point Cloud & FPS & Point & MLP \\
			RS-CNN \cite{Liu_2019_RS_CNN} & Point Cloud & Ball Query & Point & RS Conv. \\
			A-CNN \cite{Komarichev_2019_A_CNN} & Point Cloud & Ball Query & Point & A-CNN\\
			KPConv \cite{Thomas_2019_KPConv} & Point Cloud & Ball Query & Point & KPConv \\
			So-Net \cite{li2018so_net} & Point Cloud & KNN  & Point & SOM \\
			PointConv \cite{Wu_2019_PointConv} & Point Cloud & KNN & Point & PointConv \\
			PointCNN \cite{li2018pointcnn} & Point Cloud & KNN & Point  & $\mathcal{X}$-Conv\\
			PointWeb \cite{Zhao_2019_PointWeb} & Point Cloud & KNN &  Point & AFA \\
			ShellNet \cite{Zhang2019ShellNet} & Point Cloud & KNN & Point & ShellNet\\
			DensePoint \cite{Liu_2019_DensePoint} & Point Cloud & Rand. Samp. & Point & PConv. \\

			PartNet \cite{Yu_2019_PartNet} & Point Cloud & Latent Tree & Point & MLP\\
			Kd-Net \cite{klokov2017escape} & Point Cloud & Kd-Tree & Point & ConvNets \\
			MAP-VAE \cite{Han_2019_MAP_VAE} & Point Cloud & Latent Tree & Point & GRU \\
			
			\midrule
			
			RGCNN \cite{te2018rgcnn} & Point Cloud & Complete Graph & Graph & Graph Conv. \\
			Point-Edge \cite{Jiang_2019_Hierarchical} & Point Cloud & FPS & Graph & Point-Edge Net \\
			
			SpiderCNN \cite{xu2018spidercnn} & Point Cloud & KNN & Graph & SpiderConv \\
			PAN \cite{pan2019pointatrousnet} & Point Cloud & KNN & Graph & Point Atrous Conv. \\
			GANN \cite{li2019graph} & Point Cloud & KNN & Graph & Graph Attention \\
			DG-CNN \cite{wang2019dynamic} & Point Cloud & KNN & Graph &  Edge-Conv \\
			Kc-Net \cite{shen2018mining} & Point Cloud & KNN & Graph & MLP \\
			HDGCN \cite{liang2019hierarchical} & Point Cloud & KNN & Graph & MLP \\
			GAC-Net \cite{Wang_2019_Graph} & Point Cloud & Rand. Samp. & Graph & Graph Attention\\
			SPGraph \cite{landrieu2018large} & Point Cloud & Voronoi Adj. Graph & Graph & GRU\\
			\midrule
			RS-Net \cite{huang2018recurrent} & Point Cloud & 3D Grid Samp. & Voxel & RNN \\
			
			VV-Net \cite{Meng_2019_VV_Net} & Point Cloud & 3D Grid Samp.& Voxel & VAE \\
			PointGrid \cite{le2018pointgrid} & Point Cloud & 3D Grid Samp.& Voxel & 3D CNN \\
			InterpConv \cite{Mao_2019_Interpolated} & Point Cloud & 3D Grid Interpolate & Voxel & InterpConv\\
			O-CNN \cite{wang2017cnn} & Point Cloud & Octree & Voxel & MLP \\
			$\Psi$-CNN \cite{Lei_2019_Octree} & Point Cloud & Spherical 3D Grid & Voxel & Spherical Conv.\\
			SPLATNet \cite{su2018splatnet} & Point Cloud & Lattice Interpolate & Lattice & Bilateral Conv.\\
			SFCNN \cite{rao2019spherical} & Point Cloud & Sphere Mapping  & Sphere & Sph. Fractal Conv.\\
			SyncSpecCNN \cite{yi2017syncspeccnn} & Point Cloud & 3D Grid Samp. & Spectral & SpecTN \\

			\bottomrule
		\end{tabular}
	}
	\vspace{-3mm}
\end{table}

In contrast, we propose an efficient hierarchical approximate algorithm with Delaunay triangulation to map each point cloud onto a 2D image space.

\bfsection{Network Architectures}
Network operations are the key to hierarchically learn the local context and perform semantic segmentation on point clouds. Grid-based approaches usually apply regular 2D or 3D CNNs on the grid representations. Graph-based approaches usually apply customized convolutions on graph representations. For point-based approaches, MLP is the most widely used network. For some other point-based approaches, customized convolution operators are designed as well to support their own network architectures. Recurrent neural networks (RNNs) are applied in some works to handle the unfixed-sized point inputs. Please refer to Table \ref{tab:component_comp} for more references.

In contrast, we apply the classic U-Net to our image representations for point cloud segmentation. In our ablation study later, we also test several alternative 2D CNN architectures and all of them get comparable results to the literature.

\bfsection{Graph Drawing}
According to the purposes of graph layout, there exist two families of graph drawing algorithms, in general. N-planar graph \cite{schnyder1990embedding} focuses on presenting the graph on a plane with least edge intersections regardless the implicit topological features. Force-directed approaches such the KK algorithm, on the other hand, focus on minimizing the difference of graph node adjacency before and after 2D layout. 
Fruchterman-Reingold (FR) \cite{fruchterman1991graph}, FM$^3$ \cite{meyerhenke2015drawing} and ForceAtlas2 \cite{jacomy2014forceatlas2} speed up the force-directed layout computation for large-scale graphs by introducing hierarchical schemes and optimized iterating functions. 

Note that graph drawing can be considered as a subdiscipline of network embedding \cite{hamilton2017representation, cui2018survey, cai2018comprehensive} whose goal is to find a low dimensional representation of the network nodes in some metric space so that the given similarity (or distance) function is preserved as much as possible. In summary, graph drawing focuses on the 2D/3D visualization of graphs, while network embedding emphasizes the learning of low dimensional graph representations.

In this paper, we propose a hierachical graph drawing algorithm based on the KK algorithm, where we apply the FR method as layout initialization and then apply a novel discretization method to achieve grid layout.

\section{Our Method: A System Overview} \label{sec:Method}

\subsection{Graph Construction from Point Clouds}
In the literature a graph from a point cloud is usually generated by connecting the $K$ nearest neighbours (KNN) of each point. However, such KNN approaches suffer from selecting a suitable $K$. When $K$ is too small, the points are intended to form small subgraphs (\ie clusters) with no guarantee of connectivity among the subgraphs. When $K$ is too large, points are densely connected, leading to much more noise in local feature extraction.

In contrast, in this work we employ the Delaunary triangulation \cite{delaunay1934sphere}, a widely-used triangulation method in computational geometry, to create graphs based on the positions of points. The triangulation graph has three advantages: (1) The connection of all the nodes in the graph is guaranteed; (2) All the local nodes are directly connected; (3) The total number of graph connections is relatively small. In our experiments we found that the Delaunary triangulation gives us slightly better segmentation performance than the best one using KNN $(K=20)$ with margin of about 0.7\%.

The worst-case computational complexity of the Delaunay triangulation is $O(n^{\lceil\frac{d}{2}\rceil})$ \cite{amenta2007complexity} where $d$ is the feature dimension and $\lceil\cdot\rceil$ denotes the ceiling operation. Thus in the 3D space the complexity is $O(n^2)$ with $d=3$. In our experiments we found that its running time on 2048 points is about 0.1s (CPU:Intel Xeon E5-2687W@3.1GHz), on average. 

\subsection{Graph Drawing: from Graphs to Images}
Let $\mathcal{G}=(\mathcal{V}, \mathcal{E})$ be an undirected graph with a vertex set $\mathcal{V}$ and an edge set $\mathcal{E}\subseteq\mathcal{V}\times\mathcal{V}$. $s_{ij}\geq1, \forall i\neq j$ is the graph-theoretic distance such as shortest-path between two vertices $v_i, v_j\in\mathcal{V}$ on the graph that encodes the graph topology. 

Now we would like to learn a function $f:\mathcal{V}\rightarrow\mathbb{Z}^2$ to map the graph vertex set to a set of 2D integer coordinates on the grid so that the graph topology can be preserved as much as possible given a metric $d:\mathbb{R}^2\times\mathbb{R}^2\rightarrow\mathbb{R}$ and a loss $\ell:\mathbb{R}\times\mathbb{R}\rightarrow\mathbb{R}$. As a result, we are seeking for $f$ to minimize the objective $\min_{f}\sum_{i\neq j}\ell(d(f(v_i), f(v_j)), s_{ij})$. Letting $\mathbf{x}_i=f(v_i)\in\mathbb{Z}^2$ as reparametrization, we can rewrite this objective as an {\em integer programming} (IP) problem $\min_{\mathcal{X}\subseteq\mathbb{Z}^2}\sum_{i\neq j}\ell(d(\mathbf{x}_i, \mathbf{x}_j), s_{ij})$, 
where the set $\mathcal{X}=\{\mathbf{x}_i\}$ denotes the {\em 2D grid layout} of the graph, \ie all the vertex coordinates on the 2D grid.

For simplicity we set $\ell$ and $d$ above to the least-square loss and Euclidean distance to preserve topology, respectively. This leads us to the following objective for learning:
\begin{align}\label{eqn:kk}
\hspace{-3mm} \min_{\mathcal{X}\subseteq\mathbb{Z}^2}\sum_{i\neq j}\frac{1}{2}\left(\frac{\|\mathbf{x}_i - \mathbf{x}_j\|}{s_{ij}} - 1\right)^2, \; \mbox{s.t.} \; \mathbf{x}_i\neq\mathbf{x}_j, \forall i\neq j.
\end{align}

In fact the KK algorithm shares the same objective as in Eq. \ref{eqn:kk}, but with different feasible solution space in $\mathbb{R}^2$, leading to relatively faster solutions that are used as the initialization in our algorithm later (see Alg. \ref{alg:LayoutDiscretization}). Once the location of a point on the grid is determined, we associate its 3D feature as well as label, if available, with the location, finally leading to the $(x,y,z)$-image representation and a label mask with the same image size for network training.

In general an IP problem is NP-complete \cite{wolsey2014integer} and thus finding exact solutions is challenging. Relaxation and rounding is a widely used heuristic for solving IP due to its efficiency \cite{bradley1977applied}, where rounding is applied to the solution from the relaxed problem as the solution for the IP problem. However, considering that the computational complexity of the KK algorithm is at least $O(n^2)$ \cite{kobourov2012spring} with the number of nodes $n$ in a graph (\ie thousands of points in a cloud for our case), it would be still too expensive to compute even if we relax the IP with rounding. Empirically we found that the running time of the KK algorithm on 2048 points is about 38s (CPU:Intel Xeon E5-2687W@3.1GHz), on average, which is considerably long. To accelerate the computation in practice, we propose a novel hierarchical solution in Sec. \ref{sec:hierarchy}. 

\subsection{Multi-Scale U-Net for Point Segmentation}
\begin{figure}[t]
	\centering
	\includegraphics[width=\columnwidth]{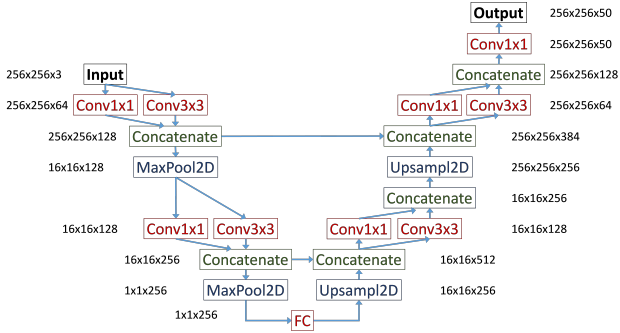}
	\vspace{-7mm}
	\caption{\footnotesize Illustration of our multi-scale U-Net architecture.}
	\label{fig:unet}
	\vspace{-3mm}
\end{figure}

Eq. \ref{eqn:kk} enforces our image representations for the point clouds to be compact, indicating that the local structures in a point cloud are very likely to be preserved as local patches in its image representation. This is crucial for 2D CNNs to work because as such small convolutional kernels (\eg $3\times3$) can be used for local feature extraction. 

To capture these local patterns in images, multi-scale convolutions are often used in networks such as the inception module in GoogLeNet \cite{szegedy2015going}. U-Net \cite{ronneberger2015u} was proposed for biomedical image segmentation, and its variants are widely used for different image segmentation tasks. As illustrated in Fig. \ref{fig:unet}, in this paper we propose a multi-scale U-Net that integrates the inception module with U-Net, where FC stands for the fully connected layer, ReLU activation is applied after each Inception module and FC layer, and the softmax activation is applied after the last Conv$1\times1$ layer. 

\setlength{\columnsep}{10pt}
\begin{wraptable}{r}{.55\columnwidth}\footnotesize
\vspace{-3mm}
\caption{\footnotesize Performance comparison on ShapeNet using different U-Nets.}\label{tab:unet}
\vspace{-3mm}
\setlength{\tabcolsep}{1.5pt}
    \centering
		\begin{tabular}{c|c|c|c}
			\toprule
			Scales in U-Net & 1x1 & 3x3  & Inception \\
			\midrule
			Instance mIoU (\%) & 83.1 & 82.5 & 88.0 \\
			\bottomrule
		\end{tabular}
\vspace{-3mm}
\end{wraptable} 
\bfsection{Single-Scale \vs Multi-Scale}
We only consider two sizes of 2D convolution kernels, \ie $1\times1$ and $3\times3$, because in our experiments we found that larger sizes of kernels do not bring significant improvement but heavier computational burden. We also compare the performance using single \vs multiple scales in Table \ref{tab:unet}. As we see the multi-scale U-Net with the inception module significantly outperforms the other single scale U-Nets.

\begin{figure}[t]
	\centering
	\includegraphics[width=\columnwidth]{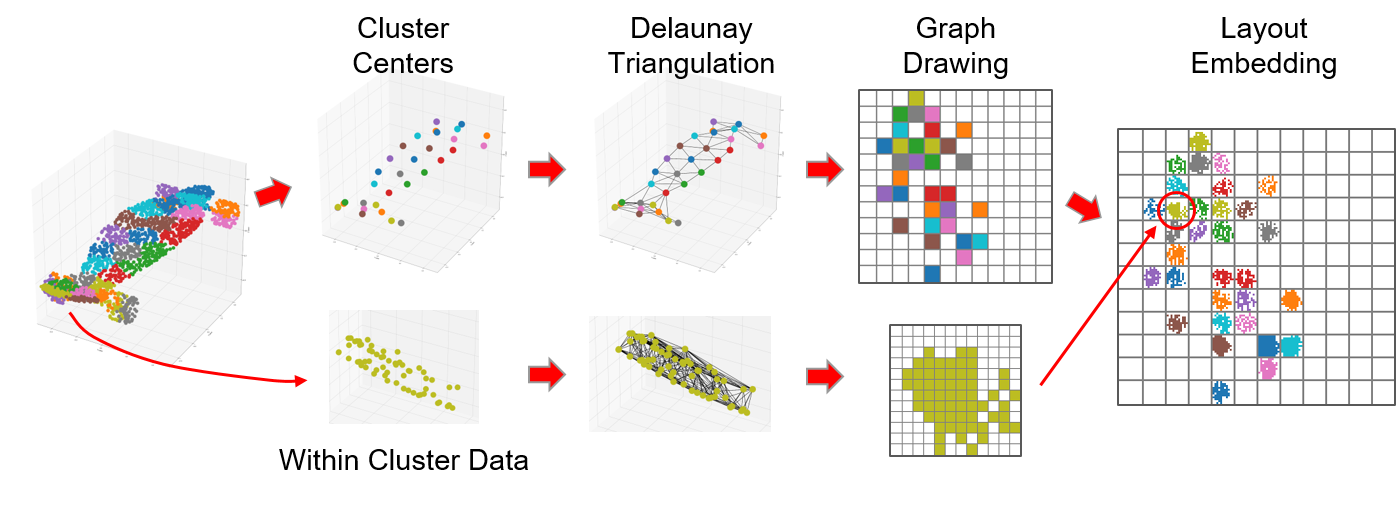}
	\vspace{-5mm}
	\caption{\footnotesize Illustration of hierarchical approximation for a point cloud. Each color represents a cluster where all the points share the same color.}
	\label{fig:hierarchical_graph_drawing}
	\vspace{-5mm}
\end{figure}

\setlength{\columnsep}{10pt}
\begin{wraptable}{r}{.67\columnwidth}\footnotesize
\vspace{-3mm}
\caption{\footnotesize Instance mIoU comparison on ShapeNet using different CNNs.}\label{tab:unet_cnn}
\vspace{-3mm}
\setlength{\tabcolsep}{1.2pt}
    \centering
		\begin{tabular}{c|c|c|c|c}
			\toprule
			CNNs & Conv1x1 & Conv3x3  & SegNet \cite{badrinarayanan2017segnet} & U-Net \\
			\midrule
			mIoU (\%) & 81.6 & 78.1 & 86.9 & 88.0 \\
			\bottomrule
		\end{tabular}
\vspace{-3mm}
\end{wraptable} 
\bfsection{U-Net \vs CNNs}
We also compare our U-Net with some other CNN architectures in Table \ref{tab:unet_cnn}. A baseline is an autoencoder-decoder network with similar architecture in Fig. \ref{fig:unet} but no multi-scales and skip connections. We test it with $1\times1$ and $3\times3$ kernels, respectively, as shown in Table \ref{tab:unet_cnn}. A second baseline is SegNet \cite{badrinarayanan2017segnet}, a much more complicated autoencoder-decoder. Again our U-Net works the best. By comparing Table \ref{tab:unet_cnn} and Table \ref{tab:unet}, we can see that the skip connections in U-Net really help improve the performance. Note that our simple baselines can achieve comparable performance with the literature already.

All the comparisons above are based on the same image representations under the same protocols. Please refer to our experimental section for more details.


\section{Efficient Hierarchical Approximation}\label{sec:hierarchy}

\subsection{Two-Level Graph Drawing}
For simplicity, in this section we will use the example in Fig. \ref{fig:hierarchical_graph_drawing} to explain the key components in our hierarchical approximation. All the operations here can be easily extended to hierarchical cases with no change.

Given a point cloud, we first cluster these points hierarchically. We then apply the Delaunay triangulation and our graph drawing algorithms sequentially to the cluster centers as well as the within-cluster points per cluster, respectively, producing higher and lower-level graph layouts. Finally we embed all the lower-level graph layouts into the higher-level layout (recursively along the hierarchy) to produce the 2D image representation. For instance, we cluster a 2048-point cloud from ShapeNet into 32 clusters, and build a higher-level grid with size $16\times16$ using these 32 cluster centers. Within each cluster we build a lower-level grid with size $16\times16$ as well using the points belonging to the cluster. We finally construct the image representation for the cloud with size $256\times256$.

\begin{algorithm}[t]\footnotesize
	\SetAlgoLined
	\SetKwInOut{Input}{Input}\SetKwInOut{Output}{Output}
    \Input{point cloud $\mathcal{P} = \{\mathbf{p}\}$, number of clusters $K$, parameter $\alpha$, \\distance metric $s$, cluster center computing function $c$}
	\Output{balanced point clusters $\mathcal{H}$}
	\BlankLine
	$\mathcal{H}\leftarrow \mbox{KMeans}(\mathcal{P}, K)$;
	
    \While{$\exists h^* \in \mathcal{H}, |h^*|>\alpha\frac{ |\mathcal{P}|}{K} $}
        {
        
        $h' \in\argmin_{\Tilde{h}: |\Tilde{h}|<
        \frac{|\mathcal{P}|}{K}} \left\{s(c(h^*),c(\Tilde{h}))\right\}$;
        
        $\mathbf{p}' \in \argmin_{\mathbf{p}\in h^*}\left\{s(\mathbf{p}, c(h'))\right\}$;
        
        $h^* \leftarrow h^*\setminus \{\mathbf{p}'\}; \; h' \leftarrow h' \cup \{\mathbf{p}'\}$;
        
        }
	\caption{\footnotesize Balanced KMeans for Clustering}\label{alg:Cluster_Balancing}
\end{algorithm}
\begin{algorithm}[t]\footnotesize
	\SetAlgoLined
	\SetKwInOut{Input}{Input}\SetKwInOut{Output}{Output}
    \Input{Graph $\mathcal{G}$, 2D grid $\mathcal{S}\subseteq\mathbb{Z}^2$}
	\Output{Graph layout $\mathcal{X}\subseteq\mathbb{Z}^2$}
	\BlankLine
	$\mathcal{X} \leftarrow \mbox{KK\_2D\_layout}(\mathcal{G}); \mathbf{a}\leftarrow\mbox{mean}(\mathcal{X}); \mathbf{b}\leftarrow\mbox{std}(\mathcal{X})$;
	
	\lForEach{$\mathbf{x}\in\mathcal{X}$}{
	    $\mathbf{x}\leftarrow\mbox{round}((\mathbf{x}-\mathbf{a})./\mathbf{b}*\sqrt{|\mathcal{X}|})$
	}
	
	\While{$\exists \mathbf{x}_i = \mathbf{x}_j, i\neq j, \mathbf{x}_i\in \mathcal{X},\mathbf{x}_j \in \mathcal{X}$}{
        
        $\mathbf{x}^*\in\argmin_{\mathbf{x}\in\mathcal{S}\setminus\mathcal{X}}\|\mathbf{x}_i-\mathbf{x}\|; \mathbf{x}_i\leftarrow\mathbf{x}^*$;
        
        
	}
	
	return $\mathcal{X}$;
	
	\caption{\footnotesize Fast Graph-to-Image Drawing Algorithm }\label{alg:LayoutDiscretization}
	
\end{algorithm}

\begin{figure*}[t]
	\centering
	\includegraphics[width=0.9\textwidth]{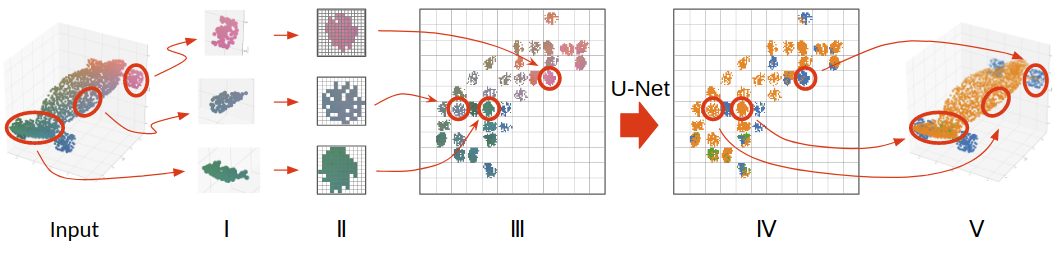}
	\vspace{-5mm}
	\caption{\footnotesize Illustration of our pipeline for point cloud semantic segmentation. Input: point cloud of a skateboard from ShapeNet. (\rom{1}): point cloud clustering, (\rom{2}): within-cluster image representation from graph drawing,
	(\rom{3}): image embedding to generate a representation for the cloud,
	(\rom{4}): image segmentation using U-Net,
	(\rom{5}): prediction reversion from the image representation to the point cloud. Here colors indicate either $(x,y,z)$ features or the predicted labels.
	}
	\label{fig:Pipeline}
	\vspace{-5mm}
\end{figure*}

\subsubsection{Balanced KMeans for Clustering}

The key to accelerate computation in graph construction from point clouds is to reduce the number of points that the triangulation and graph drawing algorithms process at a time. Therefore, without loss of information we introduce hierarchical clustering, following the strategy in \cite{fruchterman1991graph, meyerhenke2015drawing, jacomy2014forceatlas2}.

Recall that the complexity of the Delaunay triangulation and KK algorithms is $O(n^2)$, roughly speaking. Now consider the problem where given $n$ points how we should determine $K$ clusters so that the complexity in our graph construction from point clouds is minimize. The solution of this problem is that, ideally, all the clusters should have equal size of $\frac{n}{K}$, \ie balancing. Some algorithms such as normalized cut \cite{shi2000normalized} are developed for learning balanced clusters, however, suffering from high complexity. Fast algorithms such as KMeans, unfortunately, do not provide such balanced clusters by nature.

We thus propose a heuristic post-processing step on top of KMeans to approximately balance the clusters with condition $|h|\leq\alpha\frac{|\mathcal{P}|}{K}, \forall h\in\mathcal{H}$ where $\mathcal{P}=\{\mathbf{p}\}$ denotes a point cloud with size $|\mathcal{P}|$, $\mathcal{H}=\{h\}$ denotes a set of clusters (\ie point sets) with size $K$, $|h|$ denotes the size of cluster $h$, and $\alpha\geq1$ is a predefined constant. We list our algorithm in Alg.~\ref{alg:Cluster_Balancing}.

We first apply Kmeans to generate the cluster initials. We then target on one of the oversized clusters, $h^*$, at each iteration and change the cluster association for only one point. We determine the target cluster $h'$ as the closest not-full cluster to $h*$ to receive a point. To send a point from $h^*$ to $h'$, the selected point is a boundary point that is closest to the center of $h'$. By default we set $\alpha=1.2$, although we observed that higher values has little impact on either running time or performance. 


\subsubsection{Fast Graph-to-Image Drawing Algorithm}

Recall that our graph drawing algorithm in Eq. \ref{eqn:kk} is an IP problem with complexity of NP-complete. Even though we use hierarchical clustering to reduce the number of points for processing, solving the exact problem is still challenging. To overcome this problem, we propose a fast approximate algorithm in Alg. \ref{alg:LayoutDiscretization}, where $|\mathcal{X}|$ denotes the number of points.

\bfsection{Layout Discretization}
After the layout initialization with the KK algorithm, we discrete the layouts onto the 2D grid. We first normalize the layout to a Gaussian distribution with a zero mean and an identity standard deviation (std). Then we rescale each 2D point in the layout with a scaling factor $\sqrt{|\mathcal{X}|}$, followed by a rounding operator. The intuition behind this is to organize the layout within a $\sqrt{|\mathcal{X}|}\times\sqrt{|\mathcal{X}|}$ patch as tightly as possible while minimizing the topological change. We finally replace each collided point with its nearest empty cell on the grid sequentially as our final graph layout.

\bfsection{Point Collision}
In order to control the running time and image size in practice, we make a trade-off to predefine the maximum number of iterations as well as the maximum size of the 2D grid in Alg. \ref{alg:LayoutDiscretization}. This may incur that some 3D points will collide at the same location on the grid. Such point collision scenarios, however, are very rare in our experiments. For instance, using our implementation for ShapeNet we observe 26 collisions with $2\times26=52$ points (\ie 2 points per collision) among 5,885,952 points in the testing set when projected onto 2D grid, leading to a $8.8\times10^{-6}$ point collision ratio.

Once point collision occurs, we randomly select a point from the collided points and put the selected point at the location with its 3D feature $(x,y,z)$ and label, if available, for training U-Net. We observe that max pooling or average pooling is not appropriate to be applied here, because the labels of collided points can vary, \eg points at the boundary of different parts, leading to confusion for training U-Net. 

At test time, we propagate the predicted label of the selected point to all its collided points. We observe only 4 out of 52 points mislabelled on ShapeNet due to point collision. 

\subsection{Generalization}

\begin{wrapfigure}{r}{.3\linewidth}
    \vspace{-30pt}
    \begin{minipage}{\linewidth}
        \centering
        \includegraphics[width=\linewidth]{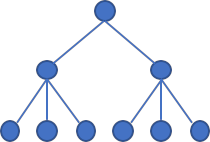}     
    \end{minipage}
    \vspace{-2mm}
    \caption{\footnotesize Full-tree illustration for our hierarchical clustering.
    }\label{fig:full_tree}
    \vspace{-5mm}
\end{wrapfigure}

Recall that we would like to achieve balanced clusters in our hierarchical method for computational efficiency. Therefore, as generalization we propose using the full tree data structure, as illustrated in Fig.~\ref{fig:full_tree}, to organize the hierarchical clusters, where at each cluster a higher-level graph is built using the Delaunay triangulation on the cluster centers, following by graph drawing to generate an image patch. Then we embed all the patches hierarchically to produce an image representation for a point cloud, and apply the remaining steps in Fig.~\ref{fig:Pipeline} for segmentation.

\bfsection{Complexity}
For simplicity and without loss of generality, assume that the full tree has $L\geq1$ levels, and each cluster at the same level contains the same number of points. Let $a_i, b_i$ be the numbers of clusters and sub-clusters per cluster at the $i$-th level, respectively, and $n$ be the total number of points. For instance, in Fig. \ref{fig:full_tree} we have $L=3, a_1=1, b_1=2, a_2=2, b_2=3, a_3=6, b_3=1, n=6$. Then it holds that $\prod_{j=i}^{L}b_i=\frac{n}{a_i}, \forall i$. We observe that in practice the running time of our hierarchical approximation is dominated by the KK initialization in Alg. \ref{alg:LayoutDiscretization} (see Table \ref{tab:running_time} for more details).

\begin{prop}[Complexity of Hierarchical Approximation]\label{prop:1}
Given a full tree with $(a_i, b_i), \forall i\in[L]$ as above, the complexity of our hierarchical approximation is dominated by $O\left(n^{\frac{L+1}{L}}\right)$, at least.
\end{prop}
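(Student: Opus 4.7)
The plan is to read off the running-time recurrence implied by the full-tree structure and then optimize over the branching factors $(b_i)$ subject to the product constraint $\prod_{j=i}^{L} b_j = n/a_i$. Since the paper has already identified the KK initialization in Alg.~\ref{alg:LayoutDiscretization} as the dominant cost, and since KK is $O(m^2)$ on a graph with $m$ nodes, the work at each node of the tree at level $i$ is $O(b_i^2)$ (the Delaunay step is also $O(b_i^2)$ in 3D by Sec.~3.1, so it is absorbed). There are $a_i$ such nodes at level $i$, so the total time is
\begin{equation*}
T(n) \;=\; \sum_{i=1}^{L} a_i\, b_i^2.
\end{equation*}

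Next I would rewrite this purely in terms of the $b_i$'s using the relation $a_{i+1}=a_i b_i$ with $a_1=1$ and $a_{L+1}=n$. Equivalently $a_i = \prod_{j<i} b_j$ and $\prod_{j=1}^{L} b_j = n$. Substituting,
\begin{equation*}
T(n) \;=\; \sum_{i=1}^{L} b_i^{2}\,\prod_{j<i} b_j.
\end{equation*}
The goal is now to show $T(n)=\Omega(n^{(L+1)/L})$ and that this bound is tight, attained by the balanced choice $b_i = n^{1/L}$.

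For the lower bound I would argue term-by-term: the last summand alone is $a_L b_L^2 = (n/b_L)\, b_L^2 = n\, b_L \ge n \cdot n^{1/L}$ provided $b_L \ge n^{1/L}$; in the opposite regime $b_L < n^{1/L}$ one of the earlier $b_j$'s must exceed $n^{1/L}$ (by the product constraint), and a short pigeonhole/AM-GM argument on $\sum_i b_i^2 \prod_{j<i} b_j$ yields the same $n^{(L+1)/L}$ floor. Cleanly, by AM-GM on the $L$ summands with the product constraint, each summand is at least $n^{(i+1)/L}$ when $b_j = n^{1/L}$ for all $j$, and the maximum of these, $n^{(L+1)/L}$, both lower-bounds the sum up to a constant and is realized in the balanced case. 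For the matching upper bound I would simply plug $b_i = n^{1/L}$ into $T(n)$ and observe
\begin{equation*}
T(n) \;=\; \sum_{i=1}^{L} n^{(i+1)/L} \;=\; O\!\left(n^{(L+1)/L}\right),
\end{equation*}
since the geometric series is dominated by its last term.

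The main obstacle is the optimization step, namely ruling out ``unbalanced'' trees that one might naively hope achieve a smaller exponent: the sum $\sum_i b_i^2 \prod_{j<i} b_j$ is not symmetric in the $b_i$'s (earlier factors are amplified by fewer prefix products), so one must be a little careful showing that imbalance only hurts. I expect a two-line convexity or AM-GM argument to dispatch this, but that is where the proof has nontrivial content; everything else is bookkeeping from the tree structure and the already-established per-node cost of KK and Delaunay.
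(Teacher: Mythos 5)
Your reduction to $T(n)=\sum_{i=1}^{L}a_i b_i^2$ and your balanced-case evaluation (with $b_i=n^{1/L}$ each level costs $n^{(i+1)/L}$, so $T(n)=\Theta(n^{(L+1)/L})$, dominated by the leaf level) are correct and coincide with the paper's setup. The genuine gap is exactly the step you defer to ``a two-line convexity or AM-GM argument'': no such argument exists, because the lower bound you want is false. Take $L=2$, $b_1=n^{2/3}$, $b_2=n^{1/3}$ (so $b_1b_2=n$); then $T=b_1^2+b_1b_2^2=n^{4/3}+n^{2/3}\cdot n^{2/3}=2n^{4/3}=o(n^{3/2})$, strictly below the claimed floor $n^{(L+1)/L}=n^{3/2}$. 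This also pinpoints where your pigeonhole step breaks: when $b_L<n^{1/L}$, some earlier $b_k$ must exceed $n^{1/L}$, but its summand $b_k^2\prod_{j<k}b_j$ carries fewer prefix factors (for $k=1$ it is just $b_1^2$), so it need not reach $n^{(L+1)/L}$. More generally, writing $b_i=n^{\beta_i}$ with $\sum_i\beta_i=1$ and choosing $\beta_{i+1}=\beta_i/2$ equalizes all $L$ exponents at $2^L/(2^L-1)$, giving $T\le L\,n^{2^L/(2^L-1)}$, and $2^L/(2^L-1)=1+1/(2^L-1)<1+1/L=(L+1)/L$ for every $L\ge2$. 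So imbalance does not ``only hurt'': front-loading the branching at the root strictly improves the exponent, and the optimum over full trees is asymptotically below $n^{(L+1)/L}$.

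For comparison, the paper's own proof runs precisely the AM-GM you were hoping for: it rewrites $\sum_i a_ib_i^2=n\sum_i b_i\big/\prod_{j>i}b_j$, lower-bounds this by $nL\bigl[n\big/\prod_{i=2}^{L}b_i^{\,i-1}\bigr]^{1/L}$, and then asserts the result is $O(n^{(L+1)/L})$. That last identification suffers from the same problem: the AM-GM expression still depends on the $b_i$, it is tight exactly at the doubling configuration above (where it evaluates to $L\,n^{2^L/(2^L-1)}$, not $n^{(L+1)/L}$), and in any case a lower bound that is merely $O(n^{(L+1)/L})$ cannot certify an ``at least $n^{(L+1)/L}$'' claim. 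So neither your proposal nor the paper's argument establishes the universal lower-bound reading of the proposition. What is true---and what the paper's implementation (equal cluster and grid sizes at every level) actually relies on---is the statement your upper-bound computation already proves: the \emph{balanced} hierarchical approximation costs $\Theta(n^{(L+1)/L})$. The honest fix is to state the proposition for the balanced tree and give your direct evaluation, dropping the claim that unbalanced trees cannot do better.
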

\begin{proof}
Here we focus on the complexity of the KK algorithm as it dominates the whole. Since for each cluster this complexity is $O(b_i^2)$, the total complexity of our approach is $O(\sum_{i=1}^La_i b_i^2)$. Because
\begin{align}
    \sum_{i=1}^La_i b_i^2 & = n\sum_{i=1}^L\frac{b_i}{\prod_{j=i+1}^L b_j}\geq nL\left[\prod_i\left(\frac{b_i}{\prod_{j=i+1}^L b_j}\right)\right]^{\frac{1}{L}} \nonumber\\
    & = nL\left(\frac{n}{\prod_{i=2}^L b_i^{i-1}}\right)^{\frac{1}{L}} = O\left(n^{\frac{L+1}{L}}\right), 
\end{align}
we can complete the proof accordingly.
\end{proof}


\section{Experiments}
We evaluate our works on two benchmark data sets for point cloud segmentation: ShapeNet \cite{yi2016ShapeNet} and PartNet \cite{Mo_2019_PartNet_dataset}. We follow exactly the same experimental setups as in PointNet \cite{qi2017pointnet} for ShapeNet and \cite{Mo_2019_PartNet_dataset} for PartNet, respectively.

ShapeNet contains 16,881 CAD shape models (14,007 and 2,874 for training and testing, respectively) from 16 categories with 50 part categories. From each shape model 2048 points are scanned and labeled with their part categories. The shapes come from the same object category share the same part label sets, while shapes from different object categories have no shared part category. 
For performance evaluation there are two mean intersection-over-union (mIoU) metrics, namely, class mIoU and instance mIoU. Class mIoU is the average over points in each shape category, while instance mIoU is the average over all shape instances.

PartNet is a semantic segmentation benchmark focusing on fine-grained part-level 3D object understanding. Compared with ShapeNet, it has 24 shape categories and 26,671 shape instances. In addition, PartNet samples 10,000 points from each shape instance and defines up to 82 part semantics in one shape category, which calls for better local context learning to recognize them. Different from training a single network for all shape categories as done in ShapeNet, PartNet defines three segmentation levels in each shape category where a network is trained and tested for each category at each level separately.

\subsection{Our Pipeline for Point Cloud Segmentation}
In all of our experiments, we utilize the pipeline as illustrated in Fig. \ref{fig:Pipeline} for point cloud segmentation. As we expect, the 3D points are mapped to the 2D image space smoothly following their relative distances in the 3D space, leading to similar distributions in the neighborhood among image pixels to those in the local regions of the point cloud.

\bfsection{Implementation}
We use the KMeans solver in the Scikit-learn library \cite{Buitinck2013sklearn} with 100 iterations in maximum, the Delaunay triangulation implementation in the Scipy library \cite{Virtanen2019SciPy}, and the spring-layout implementation for graph drawing in the Networkx library \cite{Hagberg2008networkx}. In the mask image, we ignore the pixels with no point association that do not contribute to the loss of network training.

In our pipeline there are there hyper-parameters: the number of clusters $K$, the maximum ratio $\alpha$, and the grid sizes for both lower and higher-level graph drawing. By default, we set $K=32$ and $K=100$ on ShapeNet and PartNet, respectively. On both data sets we use $\alpha = 1.2$, and set both lower and higher-level grid size to $16\times16$, leading to a $256\times256$ image representation per cloud.

We implement our multi-scale U-Net in Keras with Tensorflow backend on a desktop machine with an Intel Xeon E5-2687W@3.1GHz CPU and an NVidia RTX 2080Ti GPU. During training we follow PointNet \cite{qi2017pointnet} to rotate and jitter the shape models as input. We use the Adam \cite{kingma2014adam} optimizer and set the learning rate to 0.0001. We train the network for 100 epochs with single batch in each iteration.

\begin{table}[t]
    \centering
	\caption{\footnotesize Running time of each component in our pipeline on ShapeNet. CPU: Intel Xeon E5-2687W@3.1GHz, GPU: NVidia RTX 2080Ti}
	\label{tab:running_time}
	\vspace{-2mm}
		\centering
		\setlength{\tabcolsep}{3.3pt}\footnotesize
		\begin{tabular}{c|cccc|c}
			\toprule
			Component &  \begin{tabular}[c]{@{}c@{}}KMeans \\ Clustering\end{tabular}   & \begin{tabular}[c]{@{}c@{}}Delaunay \\ Tribulation\end{tabular}  & \begin{tabular}[c]{@{}c@{}}Graph \\ Drawing\end{tabular}   & \begin{tabular}[c]{@{}c@{}}Network \\ Inference\end{tabular}  & Total\\
			\midrule
			Time (ms) & 65.0$\pm$7.2 & 41.0$\pm$7.1 & 696.5$\pm$30.0 &  18.6$\pm$2.6 & 1054.8\\
			\midrule
			Device & CPU & CPU & CPU & GPU & - \\
			\bottomrule
		\end{tabular}
	\vspace{-5mm}
\end{table}

\bfsection{Running Time}
We also list the average running time of each component in our pipeline in Table \ref{tab:running_time} for analysis. Compared with the running time on 2048 points, both Delaunay triangulation and graph drawing algorithms are accelerated significantly (recall 0.1s and 38s, respectively). Still graph drawing dominates the overall running time. Further acceleration will be considered in our future work.

\subsection{State-of-the-art Performance Comparison}

\subsubsection{Ablation Study}
In this section we evaluate the effects of different factors on our segmentation performance using ShapeNet. We keep using the default parameters and components in our pipeline, unless we explicitly mention what to change accordingly.

\begin{wraptable}{r}{.63\columnwidth}\footnotesize
\vspace{-3mm}
\caption{\footnotesize Instance mIoU results (\%) under different settings for $s_{ij}$.}\label{tab:ablation_sij}
\vspace{-3mm}
    \centering
    \setlength{\tabcolsep}{0.5pt}\scriptsize
		\begin{tabular}{c|c|c|c}
			\toprule
			$s_{ij}$ & \begin{tabular}[c]{@{}c@{}}Triangulation +\\ Shortest Path \cite{Wang_2019_Graph} \end{tabular} & \begin{tabular}[c]{@{}c@{}} KNN (K=20) + \\ Shortest Path \end{tabular}  & \begin{tabular}[c]{@{}c@{}} 3D Distance \\ \cite{te2018rgcnn} \end{tabular}   \\
			\midrule
			mIoU & 88.0 & 87.1 & 86.4\\
			\bottomrule
		\end{tabular}
\vspace{-2mm}
\end{wraptable} 

\bfsection{Graph Distance $s_{ij}$ in Eq. \ref{eqn:kk}}
There are multiple choices to compute $s_{ij}$ that our algorithm aims to preserve. We demonstrate three ways in Table \ref{tab:ablation_sij} to verify their effects on performance. Note that for the 3D distance method, we do not construct graphs from point cloud. Rather we directly compute the $(x,y,z)$-distance between pairs of points. As we see, different $s_{ij}$'s do have impact on our segmentation performance, but relatively small. Compared with the results in Fig. \ref{fig:ShapeNet}, even using the 3D distance our pipeline can still outperform all the competitors.

\begin{wraptable}{r}{.58\columnwidth}\footnotesize
\vspace{-3mm}
\caption{\footnotesize Result comparison with different sizes of image representation.}\label{tab:ablation_SIZE}
\vspace{-3mm}
    \centering
    \setlength{\tabcolsep}{0.5pt}\scriptsize
		\begin{tabular}{c|c|c|c}
			\toprule
			Grid Size & $10 \times 10$  & $16 \times 16$ & $24 \times 24$ \\
			\midrule 
			
			Image Size & $100 \times 100$ & $256 \times 256$ & $576 \times 576$\\
			\midrule
			mIoU (\%) & 82.4 & 88.0 & 87.5\\
			\midrule
			Time (ms) & 13.6 & 18.6 & 63.3\\
			\bottomrule
		\end{tabular}
\vspace{-2mm}
\end{wraptable} 
\bfsection{Grid Size in Graph Drawing}
Such grid size affects not only our segmentation performance but also the inference time of our pipeline. As demonstration we list three grid sizes in Table \ref{tab:ablation_SIZE}. As we expect, larger image sizes lead to significantly longer inference time, but marginal change in performance. Using smaller sizes it may be difficult to preserve the topological information among points, leading to performance degradation but faster inference time.

\begin{figure}[t]
	\centering
	\includegraphics[width=1.\columnwidth]{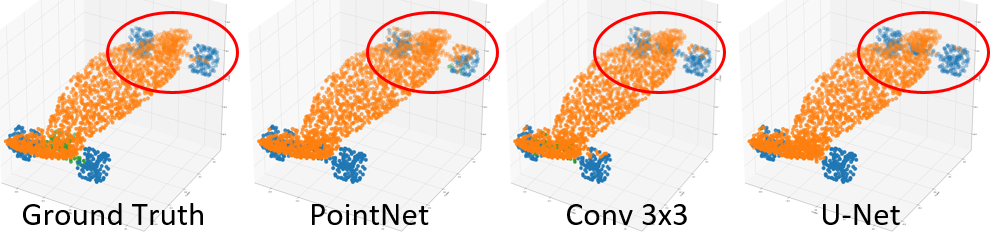}
	\vspace{-5mm}
	\caption{\footnotesize Visual comparison among different methods. {\bf Ours: U-Net}.}
	\label{fig:visual_comp}
	\vspace{-5mm}
\end{figure}

\begin{wraptable}{r}{.45\columnwidth}\footnotesize
\vspace{-3mm}
\caption{\footnotesize Result comparison with different numbers of clusters, $K$, in KMeans.}\label{tab:ablation_K}
\vspace{-3mm}
    \centering
    \setlength{\tabcolsep}{0.5pt}\footnotesize
		\begin{tabular}{c|c|c|c}
			\toprule
			$K$ & 32 & 64 & 128\\
			\midrule
			mIoU (\%) & 88.0 & 86.7 & 85.9\\
			\midrule
			Time (ms) & \;821.2\; & \;775.4\; & \;1164.5\;\\
			\bottomrule
		\end{tabular}
\vspace{-2mm}
\end{wraptable} 
\bfsection{Number of Clusters}
Similar to grid size, the number of clusters also has an impact on both segmentation performance and inference time. To verify this, we show a result comparison in Table \ref{tab:ablation_K}. With larger $K$, the performance decreases. This is probably because the higher-level graph loses more local context in the cloud so that even after pooling such loss cannot be recovered in learning. For timing, the numbers fluctuate due to different hierarchies as we prove in Prop.~\ref{prop:1}. 

\begin{table*}[t]
	\caption{\footnotesize Result comparison  (\%) with recent works on ShapeNet. Numbers in red are the best in the column, and numbers in blue are the second best.}
	\label{tab:Shapenet_Result_per_class}
	\vspace{-2mm}
		\centering
		\setlength{\tabcolsep}{3.3pt}\footnotesize
		\begin{tabular}{c|cc|cccccccccccccccc}
		    \toprule
            Method & 
            \begin{tabular}[c]{@{}c@{}}class \\ mIoU\end{tabular}
            &
            \begin{tabular}[c]{@{}c@{}}instance \\ mIoU\end{tabular}
            
            & \begin{tabular}[c]{@{}c@{}}air \\ plane \end{tabular} & bag & cap & car
            & chair & \begin{tabular}[c]{@{}c@{}}ear \\ phone\end{tabular} & guitar & knife
            & lamp  & laptop & \begin{tabular}[c]{@{}c@{}}motor \\ bike\end{tabular} & mug
            & pistol & rocket & \begin{tabular}[c]{@{}c@{}}skate \\ board\end{tabular} & table  \\
            \midrule 
            
            
            DGCNN \cite{wang2019dynamic}
            & 82.3 & 85.1
            & 84.2 & 83.7 & 84.4 & 77.1
            & 90.9 & 78.5 & 91.5 & 87.3
            & 82.9 & 96.0 & 67.8 & 93.3
            & 82.6 & 59.7 & 75.5 & 82.0 \\
            
            RS-CNN \cite{Liu_2019_RS_CNN}
            & 84.0 & 86.2
            & 83.5 & 84.8 & 88.0 & 79.6
            & 91.2 & 81.1 & 91.6 & 88.4
            & \textcolor{blue}{\bf{86.0}} & 96.0 & 73.7 & 94.1
            & 83.4 & 60.5 & 77.7 & 83.6 \\
            
            DensePoint \cite{Liu_2019_DensePoint}
            & 84.2 & 86.4
            & 84.0 & 85.4 & \textcolor{blue}{\bf{90.0}} & 79.2
            & 91.1 & 81.6 & 91.5 & 87.5
            & 84.7 & 95.9 & 74.3 & 94.6
            & 82.9 & 64.6 & 76.8 & 83.7 \\
            
            SpiderCNN \cite{xu2018spidercnn}
            & 84.1 & 85.3
            & 83.5 & 81.0 & 87.2 & 77.5
            & 90.7 & 76.8 & 91.1 & 87.3 
            & 83.3 & 95.8 & 70.2 & 93.5 
            & 82.7 & 59.7 & 75.8 & 82.8  \\       
            
            PointGrid \cite{le2018pointgrid}
            & 82.2 & 86.4
            & 85.7 & 82.5 & 81.8 & 77.9
            & \textcolor{red}{\bf{92.1}} & \textcolor{blue}{\bf{82.4}} & \textcolor{red}{\bf{92.7}} & 85.8 
            & 84.2 & 95.3 & 65.2 & 93.4 
            & 81.7 & 56.9 & 73.5 & 84.6  \\ 
            
            VV-Net \cite{Meng_2019_VV_Net}
            & 84.2 & \textcolor{blue}{\bf{87.4}}
            & 84.2 & \textcolor{red}{\bf{90.2}} & 72.4 & \textcolor{red}{\bf{83.9}}
            & 88.7 & 75.7 & \textcolor{blue}{\bf{92.6}} & 87.2
            & 79.8 & 94.9 & 73.4 & 94.4 
            & \textcolor{blue}{\bf{86.4}} & \textcolor{blue}{\bf{65.2}} & \textcolor{red}{\bf{87.2}} & \textcolor{red}{\bf{90.4}}  \\            
            
            PartNet \cite{Yu_2019_PartNet}
            & 84.1 & \textcolor{blue}{\bf{87.4}}
            & \textcolor{red}{\bf{87.8}} & 86.7 & 89.7 & 80.5
            & \textcolor{blue}{\bf{91.9}} & 75.7 & 91.8 & 85.9 
            & 83.6 & \textcolor{red}{\bf{97.0}} & 74.6 & \textcolor{red}{\bf{97.3}} 
            & 83.6 & 64.6 & 78.4 & 85.8  \\        
            
            $\Psi$-CNN \cite{Lei_2019_Octree}
            & 83.4 & 86.8
            & 84.2 & 82.1 & 83.8 & 80.5
            & 91.0 & 78.3 & 91.6 & 86.7 
            & 84.7 & 95.6 & \textcolor{blue}{\bf{74.8}} & 94.5
            & 83.4 & 61.3 & 75.9 & 85.9\\
            
            SFCNN \cite{rao2019spherical}
            & 82.7 & 85.4
            & 83.0 & 83.4 & 87.0 & 80.2
            & 90.1 & 75.9 & 91.1 & 86.2
            & 84.2 & \textcolor{blue}{\bf{96.7}} & 69.5 & 94.8
            & 82.5 & 59.9 & 75.1 & 82.9\\
            
            PAN \cite{pan2019pointatrousnet}
            & 82.6 & 85.7
            & 82.9 & 81.3 & 86.1 & 78.6
            & 91.0 & 77.9 & 90.9 & 87.3
            & 84.7 & 95.8 & 72.9 & 95.0
            & 80.8 & 59.6 & 74.1 & 83.5\\
            
            SRN \cite{Duan_2019_SRN}
            & 82.2 & 85.3
            & 82.4 & 79.8 & 88.1 & 77.9
            & 90.7 & 69.6 & 90.9 & 86.3
            & 84.0 & 95.4 & 72.2 & 94.9
            & 81.3 & 62.1 & 75.9 & 83.2\\

            PointCNN \cite{li2018pointcnn}
            & \textcolor{blue}{\bf{84.6}} & 86.1
            & 84.1 & 86.5 & 86.0 & \textcolor{blue}{\bf{80.8}}
            & 90.6 & 79.7 & 92.3 & \textcolor{blue}{\bf{88.4}}
            & 85.3 & 96.1 & \textcolor{red}{\bf{77.2}} & \textcolor{blue}{\bf{95.3}}
            & 84.2 & 64.2 & \textcolor{blue}{\bf{80.0}} & 83.0\\
            
            \midrule

            {\bf Ours}
            & \textcolor{red}{\bf{86.5}} & \textcolor{red}{\bf{88.0}}
            & \textcolor{blue}{\bf{86.8}} & \textcolor{blue}{\bf{87.5}} & \textcolor{red}{\bf{94.2}} & 77.5
            & 91.2 & \textcolor{red}{\bf{87.2}} & 91.0 & \textcolor{red}{\bf{90.4}}
            & \textcolor{red}{\bf{90.1}} & 96.3 & 72.9 & 93.0
            & \textcolor{red}{\bf{88.1}} & \textcolor{red}{\bf{70.6}} & \textcolor{blue}{\bf{80.0}} & \textcolor{blue}{\bf{86.6}} \\
			\bottomrule
		\end{tabular}
	\vspace{-2mm}
\end{table*}

\begin{table*}[t]
	\caption{\footnotesize Result comparison on PartNet using part-category mIoU (\%). P, P$^+$, S and C refer to PointNet \cite{qi2017pointnet}, PointNet++ \cite{qi2017pointnet++}, SpiderCNN \cite{xu2018spidercnn} and PointCNN \cite{li2018pointcnn}. 1, 2 and 3 refer to three tasks: coarse-, middle- and fine-grained. Short lines denote the undefined levels. Numbers are cited from \cite{Mo_2019_PartNet_dataset}. 
	}
	\label{tab:PartNet}
	\vspace{-2mm}
		\centering
		\setlength{\tabcolsep}{1.6pt}\footnotesize
		\begin{tabular}{c|c|cccccccccccccccccccccccc}
		    \toprule
             & Avg 
            & Bag & Bed & Bott & Bowl
            & Chair & Clock & Dish & Disp
            & Door & Ear & Fauc & Hat
            & Key & Knife & Lamp & Lap
            & Micro & Mug & Frid & Scis 
            & Stora & Table & Trash & Vase \\
            
            \midrule
            P1 & 57.9
            & 42.5 & 32.0 & 33.8 & 58.0 
            & 64.6 & 33.2 & 76.0 & 86.8 
            & 64.4 & 53.2 & 58.6 & 55.9
            & \textcolor{blue}{\bf{65.6}}  & 62.2 & 29.7 & 96.5
            & 49.4 & 80.0 & 49.6 & 86.4
            & 51.9 & 50.5 & 55.2 & 54.7\\
            
            P2 & 37.3
            & ---- & 20.1 & ---- & ---- 
            & 38.2 & ---- & 55.6 & ---- 
            & 38.3 & ---- & ---- & ---- 
            & ---- & ---- & 27.0 & ---- 
            & 41.7 & ---- & 35.5 & ---- 
            & 44.6 & 34.3 & ---- & ---- \\
            
            P3 & 35.6
            & ---- & 13.4 & 29.5 & ----
            & 27.8 & 28.4 & 48.9 & 76.5
            & 30.4 & 33.4 & 47.6 & ----
            & ---- & 32.9 & 18.9 & ----
            & 37.2 & ---- & 33.5 & ----
            & 38.0 & 29.0 & 34.8 & 44.4 \\
            \midrule
            
            Avg & 51.2 
            & 42.5 & 21.8 & 31.7 & 58.0 
            & 43.5 & 30.8 & 60.2 & 81.7 
            & 44.4 & 43.3 & 53.1 & 55.9 
            & \textcolor{blue}{\bf{65.6}} & 47.6 & 25.2 & 96.5 
            & 42.8 & 80.0 & 39.5 & 86.4 
            & 44.8 & 37.9 & 45.0 & 49.6\\
            \bottomrule
            
            P$^+$1 & \textcolor{blue}{\bf{65.5}}
            & \textcolor{blue}{\bf{59.7}} & 51.8 & 53.2 & 67.3
            & 68.0 & \textcolor{blue}{\bf{48.0}} & 80.6 & 89.7
            & 59.3 & \textcolor{blue}{\bf{68.5}} & 64.7 & 62.4
            & 62.2 & \textcolor{red}{\bf{64.9}} & \textcolor{blue}{\bf{39.0}} & \textcolor{blue}{\bf{96.6}}
            & 55.7 & 83.9 & 51.8 & \textcolor{blue}{\bf{87.4}}
            & 58.0 & \textcolor{blue}{\bf{69.5}} & 64.3 & \textcolor{blue}{\bf{64.4}}\\
            
            P$^+$2 & 44.5
            & ---- & 38.8 & ---- & ---- 
            & 43.6 & ---- & 55.3 & ---- 
            & 49.3 & ---- & ---- & ---- 
            & ---- & ---- & \textcolor{blue}{\bf{32.6}} & ---- 
            & 48.2 & ---- & 41.9 & ---- 
            & 49.6 & \textcolor{blue}{\bf{41.1}} & ---- & ----\\
            
            P$^+$3 & 42.5
            & ---- & 30.3 & 41.4 & ----
            & 39.2 & \textcolor{blue}{\bf{41.6}} & \textcolor{blue}{\bf{50.1}} & 80.7
            & 32.6 & 38.4 & \textcolor{blue}{\bf{52.4}} & ----
            & ---- & \textcolor{blue}{\bf{34.1}} & \textcolor{blue}{\bf{25.3}} & ----
            & 48.5 & ---- & 36.4 & ----
            & 40.5 & \textcolor{blue}{\bf{33.9}} & 46.7 & 49.8 \\
            
            \midrule
            Avg & 58.1 
            & \textcolor{blue}{\bf{59.7}} & 40.3 & 47.3 & 67.3
            & 50.3 & \textcolor{blue}{\bf{44.8}} & \textcolor{blue}{\bf{62.0}} & 85.2
            & 47.1 & 53.5 & 58.6 & 62.4
            & 62.2 & \textcolor{blue}{\bf{49.5}} & \textcolor{blue}{\bf{32.3}} & \textcolor{blue}{\bf{96.6}}
            & 50.8 & 83.9 & 43.4 & \textcolor{blue}{\bf{87.4}}
            & 49.4 & \textcolor{blue}{\bf{48.2}} & 55.5 & 57.1\\
            \bottomrule
            
            S1 & 60.4 
            & 57.2 & \textcolor{blue}{\bf{55.5}} & \textcolor{blue}{\bf{54.5}} & \textcolor{blue}{\bf{70.6}}
            & 67.4 & 33.3 & 70.4 & 90.6 
            & 52.6 & 46.2 & 59.8 & \textcolor{blue}{\bf{63.9}} 
            & 64.9 & 37.6 & 30.2 & \textcolor{red}{\bf{97.0}} 
            & 49.2 & \textcolor{blue}{\bf{83.6}} & 50.4 & 75.6 
            & \textcolor{blue}{\bf{61.9}} & 50.0 & 62.9 & 63.8\\
            
            S2 & 41.7 
            & ---- & \textcolor{blue}{\bf{40.8}} & ---- & ---- 
            & 39.6 & ---- & \textcolor{blue}{\bf{59.0}} & ---- 
            & 48.1 & ---- & ---- & ---- 
            & ---- & ---- & 24.9 & ---- 
            & 47.6 & ---- & 34.8 & ---- 
            & 46.0 & 34.5 & ---- & ---- \\
            
            S3 & 37.0 
            & ---- & 36.2 & 32.2 & ---- 
            & 30.0 & 24.8 & 50.0 & 80.1 
            & 30.5 & 37.2 & 44.1 & ---- 
            & ---- & 22.2 & 19.6 & ---- 
            & 43.9 & ---- & \textcolor{blue}{\bf{39.1}} & ---- 
            & \textcolor{blue}{\bf{44.6}} & 20.1 & 42.4 & 32.4\\
            \midrule
            Avg & 53.6 
            & 57.2 & 44.2 & 43.4 & \textcolor{blue}{\bf{70.6}}
            & 45.7 & 29.1 & 59.8 & 85.4 
            & 43.7 & 41.7 & 52.0 & \textcolor{blue}{\bf{63.9}} 
            & 64.9 & 29.9 & 24.9 & \textcolor{red}{\bf{97.0}} 
            & 46.9 & \textcolor{blue}{\bf{83.6}} & 41.4 & 75.6 
            & 50.8 & 34.9 & 52.7 & 48.1\\
            \bottomrule
            
            C1 & 64.3 
            & \textcolor{red}{\bf{66.5}} & \textcolor{red}{\bf{55.8}} & 49.7 & 61.7 
            & \textcolor{blue}{\bf{69.6}} & 42.7 & \textcolor{blue}{\bf{82.4}} & \textcolor{blue}{\bf{92.2}} 
            & \textcolor{blue}{\bf{63.3}} & 64.1 & \textcolor{blue}{\bf{68.7}} & \textcolor{red}{\bf{72.3}} 
            & \textcolor{red}{\bf{70.6}} & 62.6 & 21.3 & \textcolor{red}{\bf{97.0}} 
            & \textcolor{blue}{\bf{58.7}} & \textcolor{red}{\bf{86.5}} & \textcolor{blue}{\bf{55.2}} & \textcolor{red}{\bf{92.4}}
            & 61.4 & 17.3 & \textcolor{red}{\bf{66.8}} & 63.4\\
            
            C2 & \textcolor{blue}{\bf{46.5}} 
            & ---- & \textcolor{red}{\bf{42.6}} & ---- & ---- 
            & \textcolor{red}{\bf{47.4}} & ---- & \textcolor{red}{\bf{65.1}} & ---- 
            & \textcolor{blue}{\bf{49.4}} & ---- & ---- & ---- 
            & ---- & ---- & 22.9 & ---- 
            & \textcolor{blue}{\bf{62.2}} & ---- & \textcolor{blue}{\bf{42.6}} & ---- 
            & \textcolor{blue}{\bf{57.2}} & 29.1 & ---- & ----\\
            
            C3 & \textcolor{blue}{\bf{46.4}} 
            & ---- & \textcolor{red}{\bf{41.9}} & \textcolor{blue}{\bf{41.8}} & ---- 
            & \textcolor{red}{\bf{43.9}} & 36.3 & \textcolor{red}{\bf{58.7}} & \textcolor{blue}{\bf{82.5}} 
            & \textcolor{blue}{\bf{37.8}} & \textcolor{red}{\bf{48.9}} & \textcolor{red}{\bf{60.5}} & ---- 
            & ---- & \textcolor{blue}{\bf{34.1}} & 20.1 & ---- 
            & \textcolor{blue}{\bf{58.2}} & ---- & \textcolor{red}{\bf{42.9}} & ---- 
            & \textcolor{red}{\bf{49.4}} & 21.3 & \textcolor{blue}{\bf{53.1}} & \textcolor{blue}{\bf{58.9}}\\
            
            \midrule
            Avg & \textcolor{blue}{\bf{59.8}} 
            & \textcolor{red}{\bf{66.5}} & \textcolor{red}{\bf{46.8}} & \textcolor{blue}{\bf{45.8}} & 61.7 
            & \textcolor{blue}{\bf{53.6}} & 39.5 & \textcolor{red}{\bf{68.7}} & \textcolor{blue}{\bf{87.4}} 
            & \textcolor{blue}{\bf{50.2}} & \textcolor{blue}{\bf{56.5}} & \textcolor{red}{\bf{64.6}} & \textcolor{red}{\bf{72.3}} 
            & \textcolor{red}{\bf{70.6}} & 48.4 & 21.4 & \textcolor{red}{\bf{97.0}} 
            & \textcolor{blue}{\bf{59.7}} & \textcolor{red}{\bf{86.5}} & \textcolor{blue}{\bf{46.9}} & \textcolor{red}{\bf{92.4}} 
            & \textcolor{blue}{\bf{56.0}} & 22.6 & \textcolor{blue}{\bf{60.0}} & \textcolor{blue}{\bf{61.2}}\\
            \bottomrule
            
            Ours1 & \textcolor{red}{\bf{72.3}}
            & 51.9 & 42.3 & \textcolor{red}{\bf{58.9}} & \textcolor{red}{\bf{90.7}} 
            & \textcolor{red}{\bf{77.8}} & \textcolor{red}{\bf{72.0}} & \textcolor{red}{\bf{89.9}} & \textcolor{red}{\bf{92.5}} 
            & \textcolor{red}{\bf{82.3}} & \textcolor{red}{\bf{85.6}} & \textcolor{red}{\bf{77.8}} & 43.8
            & 55.0 & \textcolor{blue}{\bf{64.3}} & \textcolor{red}{\bf{51.3}} & 67.5 
            & \textcolor{red}{\bf{95.7}} & 75.8 & \textcolor{red}{\bf{97.6}} & 50.5
            & \textcolor{red}{\bf{70.1}} & \textcolor{red}{\bf{88.6}} & \textcolor{blue}{\bf{65.2}} & \textcolor{red}{\bf{88.4}}\\
            
            Ours2 & \textcolor{red}{\bf{55.6}} 
            & ---- & 37.7 & ---- & ---- 
            & \textcolor{blue}{\bf{45.2}} & ---- & 51.1 & ---- 
            & \textcolor{red}{\bf{57.3}} & ---- & ---- & ---- 
            & ---- & ---- & \textcolor{red}{\bf{44.4}} & ---- 
            & \textcolor{red}{\bf{87.6}} & ---- & \textcolor{red}{\bf{48.1}} & ---- 
            & \textcolor{red}{\bf{61.1}} & \textcolor{red}{\bf{68.2}} & ---- & ----\\
            
            Ours3 & \textcolor{red}{\bf{52.3}} 
            & ---- & \textcolor{blue}{\bf{38.6}} & \textcolor{red}{\bf{57.1}} & ---- 
            & \textcolor{blue}{\bf{43.2}} & \textcolor{red}{\bf{57.8}} & 36.3 & \textcolor{red}{\bf{93.0}} 
            & \textcolor{red}{\bf{68.5}} & \textcolor{blue}{\bf{42.9}} & 39.5 & ---- 
            & ---- & \textcolor{red}{\bf{61.3}} & \textcolor{red}{\bf{33.1}} & ---- 
            & \textcolor{red}{\bf{83.4}} & ---- & 34.2 & ---- 
            & 39.0 & \textcolor{red}{\bf{40.5}} & \textcolor{red}{\bf{59.1}} & \textcolor{red}{\bf{62.1}}\\
            
            \midrule
            Avg & \textcolor{red}{\bf{63.0}} 
            & 51.9 & \textcolor{blue}{\bf{39.5}} & \textcolor{red}{\bf{58.0}} & \textcolor{red}{\bf{90.7}} 
            & \textcolor{red}{\bf{55.4}} & \textcolor{red}{\bf{64.9}} & 59.1 & \textcolor{red}{\bf{92.8}} 
            & \textcolor{red}{\bf{69.4}} & \textcolor{red}{\bf{64.3}} & \textcolor{blue}{\bf{58.7}} & 43.8
            & 55.0 & \textcolor{red}{\bf{62.8}} & \textcolor{red}{\bf{42.9}} & 67.5 
            & \textcolor{red}{\bf{88.9}} & 75.8 & \textcolor{red}{\bf{60.0}} & 50.5
            & \textcolor{red}{\bf{56.7}} & \textcolor{red}{\bf{65.8}} & \textcolor{red}{\bf{62.2}} & \textcolor{red}{\bf{75.3}}\\
            
			\bottomrule
		\end{tabular}
	\vspace{-3mm}
\end{table*}

\subsubsection{Comparison Results}
We first illustrate some visual results on ShapeNet in Fig.~\ref{fig:visual_comp}. Clear differences within the circulated regions can be observed and our result is much closer to the ground-truth.

We then list more detailed comparison on ShapeNet with some recent publications in 2018-2019 on ShapeNet in Table~\ref{tab:Shapenet_Result_per_class} that are also included in our summary in Fig. \ref{fig:ShapeNet} before. Clearly our approach achieves the best and second best in 6 and 4 out of 16 categories, respectively. Our class-mIoU and instance-mIoU results improve the state-of-the-art by 1.9\% and 0.6\%, respectively.

On PartNet our approach improves the state-of-the-art much more significantly, as listed in Table \ref{tab:PartNet}, with marginals of 6.8\%, 9.1\%, 5.9\% on the three different segmentation levels, leading to an average improvement of 3.2\%. Notice that PartNet is much more challenging given the number of categories as well as shapes. For instance, PointNet achieves 83.7\% on ShapeNet but only 57.9\% on PartNet. Our method, however, is much more robust and reliable with only 16.5 \% decrease. Taking into account all the categories in the three levels, \ie 50 in total, we achieve the best in 31 out of 50.


\section{Conclusion}
In this paper we address the problem of point cloud semantic segmentation by taking advantage of conventional 2D CNNs. To this end, we propose a novel segmentation pipeline, including graph construction from point clouds, graph-to-image mapping using graph drawing, and point segmentation using U-Net. The computational bottleneck in our pipeline is the graph drawing algorithm, which is essentially an integer programming problem. To accelerate the computation, we further propose a novel hierarchical approximate algorithm with complexity dominated by $O(n^{\frac{L+1}{L}})$, leading to a save of about 97\% running time. To better capture the local context embedded in our image presentations from point cloud, we also propose a multi-scale U-Net as our network. We evaluate our pipeline on ShapeNet and PartNet, achieving new state-of-the-art performance on both data sets with significantly large margins compared with the literature.

{
\bibliographystyle{ieee_fullname}
\bibliography{egbib}

\begin{thebibliography}{10}\itemsep=-1pt

\bibitem{amenta2007complexity}
Nina Amenta, Dominique Attali, and Olivier Devillers.
\newblock Complexity of delaunay triangulation for points on
  lower-dimensional\~{} polyhedra.
\newblock In {\em 18th ACM-SIAM Symposium on Discrete Algorithms}, pages
  1106--1113, 2007.

\bibitem{badrinarayanan2017segnet}
Vijay Badrinarayanan, Alex Kendall, and Roberto Cipolla.
\newblock Segnet: A deep convolutional encoder-decoder architecture for image
  segmentation.
\newblock {\em IEEE transactions on pattern analysis and machine intelligence},
  39(12):2481--2495, 2017.

\bibitem{bradley1977applied}
Stephen~P Bradley, Arnoldo~C Hax, and Thomas~L Magnanti.
\newblock {\em Applied mathematical programming}.
\newblock Addison-Wesley Publishing Company, 1977.

\bibitem{Buitinck2013sklearn}
Lars Buitinck, Gilles Louppe, Mathieu Blondel, Fabian Pedregosa, Andreas
  Mueller, Olivier Grisel, Vlad Niculae, Peter Prettenhofer, Alexandre
  Gramfort, Jaques Grobler, Robert Layton, Jake VanderPlas, Arnaud Joly, Brian
  Holt, and Ga{\"{e}}l Varoquaux.
\newblock {API} design for machine learning software: experiences from the
  scikit-learn project.
\newblock In {\em ECML PKDD Workshop: Languages for Data Mining and Machine
  Learning}, pages 108--122, 2013.

\bibitem{cai2018comprehensive}
Hongyun Cai, Vincent~W Zheng, and Kevin Chen-Chuan Chang.
\newblock A comprehensive survey of graph embedding: Problems, techniques, and
  applications.
\newblock {\em IEEE Transactions on Knowledge and Data Engineering},
  30(9):1616--1637, 2018.

\bibitem{caltagirone2017fast}
Luca Caltagirone, Samuel Scheidegger, Lennart Svensson, and Mattias Wahde.
\newblock Fast lidar-based road detection using fully convolutional neural
  networks.
\newblock In {\em 2017 ieee intelligent vehicles symposium (iv)}, pages
  1019--1024. IEEE, 2017.

\bibitem{chrobak1995linear}
Marek Chrobak and Thomas~H Payne.
\newblock A linear-time algorithm for drawing a planar graph on a grid.
\newblock {\em Information Processing Letters}, 54(4):241--246, 1995.

\bibitem{cui2018survey}
Peng Cui, Xiao Wang, Jian Pei, and Wenwu Zhu.
\newblock A survey on network embedding.
\newblock {\em IEEE Transactions on Knowledge and Data Engineering},
  31(5):833--852, 2018.

\bibitem{delaunay1934sphere}
Boris Delaunay et~al.
\newblock Sur la sphere vide.
\newblock {\em Izv. Akad. Nauk SSSR, Otdelenie Matematicheskii i Estestvennyka
  Nauk}, 7(793-800):1--2, 1934.

\bibitem{Duan_2019_SRN}
Yueqi Duan, Yu Zheng, Jiwen Lu, Jie Zhou, and Qi Tian.
\newblock Structural relational reasoning of point clouds.
\newblock In {\em CVPR}, June 2019.

\bibitem{frishman2007multi}
Yaniv Frishman and Ayellet Tal.
\newblock Multi-level graph layout on the gpu.
\newblock {\em IEEE Transactions on Visualization and Computer Graphics},
  13(6):1310--1319, 2007.

\bibitem{fruchterman1991graph}
Thomas~MJ Fruchterman and Edward~M Reingold.
\newblock Graph drawing by force-directed placement.
\newblock {\em Software: Practice and experience}, 21(11):1129--1164, 1991.

\bibitem{Hagberg2008networkx}
Aric~A. Hagberg, Daniel~A. Schult, and Pieter~J. Swart.
\newblock Exploring network structure, dynamics, and function using networkx.
\newblock In Ga\"el Varoquaux, Travis Vaught, and Jarrod Millman, editors, {\em
  Proceedings of the 7th Python in Science Conference}, pages 11 -- 15,
  Pasadena, CA USA, 2008.

\bibitem{hamilton2017representation}
William~L Hamilton, Rex Ying, and Jure Leskovec.
\newblock Representation learning on graphs: Methods and applications.
\newblock {\em arXiv preprint arXiv:1709.05584}, 2017.

\bibitem{Han_2019_MAP_VAE}
Zhizhong Han, Xiyang Wang, Yu-Shen Liu, and Matthias Zwicker.
\newblock Multi-angle point cloud-vae: Unsupervised feature learning for 3d
  point clouds from multiple angles by joint self-reconstruction and
  half-to-half prediction.
\newblock In {\em ICCV}, October 2019.

\bibitem{he2016resnet}
Kaiming He, Xiangyu Zhang, Shaoqing Ren, and Jian Sun.
\newblock Deep residual learning for image recognition.
\newblock In {\em CVPR}, pages 770--778, 2016.

\bibitem{hua2018pointwise}
Binh-Son Hua, Minh-Khoi Tran, and Sai-Kit Yeung.
\newblock Pointwise convolutional neural networks.
\newblock In {\em CVPR}, pages 984--993, 2018.

\bibitem{huang2018recurrent}
Qiangui Huang, Weiyue Wang, and Ulrich Neumann.
\newblock Recurrent slice networks for 3d segmentation of point clouds.
\newblock In {\em CVPR}, pages 2626--2635, 2018.

\bibitem{jacomy2014forceatlas2}
Mathieu Jacomy, Tommaso Venturini, Sebastien Heymann, and Mathieu Bastian.
\newblock Forceatlas2, a continuous graph layout algorithm for handy network
  visualization designed for the gephi software.
\newblock {\em PloS one}, 9(6):e98679, 2014.

\bibitem{Jiang_2019_Hierarchical}
Li Jiang, Hengshuang Zhao, Shu Liu, Xiaoyong Shen, Chi-Wing Fu, and Jiaya Jia.
\newblock Hierarchical point-edge interaction network for point cloud semantic
  segmentation.
\newblock In {\em ICCV}, October 2019.

\bibitem{kamada1989algorithm}
Tomihisa Kamada, Satoru Kawai, et~al.
\newblock An algorithm for drawing general undirected graphs.
\newblock {\em Information processing letters}, 31(1):7--15, 1989.

\bibitem{kingma2014adam}
Diederik~P Kingma and Jimmy Ba.
\newblock Adam: A method for stochastic optimization.
\newblock {\em arXiv preprint arXiv:1412.6980}, 2014.

\bibitem{klokov2017escape}
Roman Klokov and Victor Lempitsky.
\newblock Escape from cells: Deep kd-networks for the recognition of 3d point
  cloud models.
\newblock In {\em Proceedings of the IEEE International Conference on Computer
  Vision}, pages 863--872, 2017.

\bibitem{kobourov2012spring}
Stephen~G Kobourov.
\newblock Spring embedders and force directed graph drawing algorithms.
\newblock {\em arXiv preprint arXiv:1201.3011}, 2012.

\bibitem{Komarichev_2019_A_CNN}
Artem Komarichev, Zichun Zhong, and Jing Hua.
\newblock A-cnn: Annularly convolutional neural networks on point clouds.
\newblock In {\em CVPR}, June 2019.

\bibitem{landrieu2018large}
Loic Landrieu and Martin Simonovsky.
\newblock Large-scale point cloud semantic segmentation with superpoint graphs.
\newblock In {\em CVPR}, pages 4558--4567, 2018.

\bibitem{le2018pointgrid}
Truc Le and Ye Duan.
\newblock Pointgrid: A deep network for 3d shape understanding.
\newblock In {\em CVPR}, pages 9204--9214, 2018.

\bibitem{Lei_2019_Octree}
Huan Lei, Naveed Akhtar, and Ajmal Mian.
\newblock Octree guided cnn with spherical kernels for 3d point clouds.
\newblock In {\em CVPR}, June 2019.

\bibitem{li2018so_net}
Jiaxin Li, Ben~M Chen, and Gim Hee~Lee.
\newblock So-net: Self-organizing network for point cloud analysis.
\newblock In {\em CVPR}, pages 9397--9406, 2018.

\bibitem{li2018pointcnn}
Yangyan Li, Rui Bu, Mingchao Sun, Wei Wu, Xinhan Di, and Baoquan Chen.
\newblock Pointcnn: Convolution on x-transformed points.
\newblock In {\em Advances in Neural Information Processing Systems}, pages
  820--830, 2018.

\bibitem{li2019graph}
Zongmin Li, Jun Zhang, Guanlin Li, Yujie Liu, and Siyuan Li.
\newblock Graph attention neural networks for point cloud recognition.
\newblock In {\em 2019 IEEE International Conference on Multimedia and Expo
  (ICME)}, pages 387--392. IEEE, 2019.

\bibitem{liang2019hierarchical}
Zhidong Liang, Ming Yang, Liuyuan Deng, Chunxiang Wang, and Bing Wang.
\newblock Hierarchical depthwise graph convolutional neural network for 3d
  semantic segmentation of point clouds.
\newblock In {\em 2019 International Conference on Robotics and Automation
  (ICRA)}, pages 8152--8158. IEEE, 2019.

\bibitem{Liu_2019_DensePoint}
Yongcheng Liu, Bin Fan, Gaofeng Meng, Jiwen Lu, Shiming Xiang, and Chunhong
  Pan.
\newblock Densepoint: Learning densely contextual representation for efficient
  point cloud processing.
\newblock In {\em ICCV}, October 2019.

\bibitem{Liu_2019_RS_CNN}
Yongcheng Liu, Bin Fan, Shiming Xiang, and Chunhong Pan.
\newblock Relation-shape convolutional neural network for point cloud analysis.
\newblock In {\em CVPR}, June 2019.

\bibitem{long2015fully}
Jonathan Long, Evan Shelhamer, and Trevor Darrell.
\newblock Fully convolutional networks for semantic segmentation.
\newblock In {\em CVPR}, pages 3431--3440, 2015.

\bibitem{lyu2018chipnet}
Yecheng Lyu, Lin Bai, and Xinming Huang.
\newblock Chipnet: Real-time lidar processing for drivable region segmentation
  on an fpga.
\newblock {\em IEEE Transactions on Circuits and Systems I: Regular Papers},
  66(5):1769--1779, 2018.

\bibitem{lyu2018real}
Yecheng Lyu, Lin Bai, and Xinming Huang.
\newblock Real-time road segmentation using lidar data processing on an fpga.
\newblock In {\em 2018 IEEE International Symposium on Circuits and Systems
  (ISCAS)}, pages 1--5. IEEE, 2018.

\bibitem{Mao_2019_Interpolated}
Jiageng Mao, Xiaogang Wang, and Hongsheng Li.
\newblock Interpolated convolutional networks for 3d point cloud understanding.
\newblock In {\em ICCV}, October 2019.

\bibitem{Meng_2019_VV_Net}
Hsien-Yu Meng, Lin Gao, Yu-Kun Lai, and Dinesh Manocha.
\newblock Vv-net: Voxel vae net with group convolutions for point cloud
  segmentation.
\newblock In {\em ICCV}, October 2019.

\bibitem{meyerhenke2015drawing}
Henning Meyerhenke, Martin N{\"o}llenburg, and Christian Schulz.
\newblock Drawing large graphs by multilevel maxent-stress optimization.
\newblock In {\em International Symposium on Graph Drawing}, pages 30--43.
  Springer, 2015.

\bibitem{milioto2019rangenet++}
Andres Milioto, Ignacio Vizzo, Jens Behley, and Cyrill Stachniss.
\newblock Rangenet++: Fast and accurate lidar semantic segmentation.
\newblock In {\em Proc. of the IEEE/RSJ Intl. Conf. on Intelligent Robots and
  Systems (IROS)}, 2019.

\bibitem{Mo_2019_PartNet_dataset}
Kaichun Mo, Shilin Zhu, Angel~X. Chang, Li Yi, Subarna Tripathi, Leonidas~J.
  Guibas, and Hao Su.
\newblock Partnet: A large-scale benchmark for fine-grained and hierarchical
  part-level 3d object understanding.
\newblock In {\em CVPR}, June 2019.

\bibitem{pan2019pointatrousnet}
Liang Pan, Pengfei Wang, and Chee-Meng Chew.
\newblock Pointatrousnet: Point atrous convolution for point cloud analysis.
\newblock {\em IEEE Robotics and Automation Letters}, 4(4):4035--4041, 2019.

\bibitem{Pham_2019_JSIS3D}
Quang-Hieu Pham, Thanh Nguyen, Binh-Son Hua, Gemma Roig, and Sai-Kit Yeung.
\newblock Jsis3d: Joint semantic-instance segmentation of 3d point clouds with
  multi-task pointwise networks and multi-value conditional random fields.
\newblock In {\em CVPR}, June 2019.

\bibitem{qi2017pointnet}
Charles~R Qi, Hao Su, Kaichun Mo, and Leonidas~J Guibas.
\newblock Pointnet: Deep learning on point sets for 3d classification and
  segmentation.
\newblock In {\em CVPR}, pages 652--660, 2017.

\bibitem{qi2017pointnet++}
Charles~Ruizhongtai Qi, Li Yi, Hao Su, and Leonidas~J Guibas.
\newblock Pointnet++: Deep hierarchical feature learning on point sets in a
  metric space.
\newblock In {\em Advances in neural information processing systems}, pages
  5099--5108, 2017.

\bibitem{rao2019spherical}
Yongming Rao, Jiwen Lu, and Jie Zhou.
\newblock Spherical fractal convolutional neural networks for point cloud
  recognition.
\newblock In {\em CVPR}, pages 452--460, 2019.

\bibitem{ronneberger2015u}
Olaf Ronneberger, Philipp Fischer, and Thomas Brox.
\newblock U-net: Convolutional networks for biomedical image segmentation.
\newblock In {\em International Conference on Medical image computing and
  computer-assisted intervention}, pages 234--241. Springer, 2015.

\bibitem{schnyder1990embedding}
Walter Schnyder.
\newblock Embedding planar graphs on the grid.
\newblock In {\em Proceedings of the first annual ACM-SIAM symposium on
  Discrete algorithms}, pages 138--148. Society for Industrial and Applied
  Mathematics, 1990.

\bibitem{shen2018mining}
Yiru Shen, Chen Feng, Yaoqing Yang, and Dong Tian.
\newblock Mining point cloud local structures by kernel correlation and graph
  pooling.
\newblock In {\em CVPR}, pages 4548--4557, 2018.

\bibitem{shi2000normalized}
Jianbo Shi and Jitendra Malik.
\newblock Normalized cuts and image segmentation.
\newblock {\em Departmental Papers (CIS)}, page 107, 2000.

\bibitem{silberman2012indoor}
Nathan Silberman, Derek Hoiem, Pushmeet Kohli, and Rob Fergus.
\newblock Indoor segmentation and support inference from rgbd images.
\newblock In {\em European Conference on Computer Vision}, pages 746--760.
  Springer, 2012.

\bibitem{su2018splatnet}
Hang Su, Varun Jampani, Deqing Sun, Subhransu Maji, Evangelos Kalogerakis,
  Ming-Hsuan Yang, and Jan Kautz.
\newblock Splatnet: Sparse lattice networks for point cloud processing.
\newblock In {\em CVPR}, pages 2530--2539, 2018.

\bibitem{szegedy2015googlenet}
Christian Szegedy, Wei Liu, Yangqing Jia, Pierre Sermanet, Scott Reed, Dragomir
  Anguelov, Dumitru Erhan, Vincent Vanhoucke, and Andrew Rabinovich.
\newblock Going deeper with convolutions.
\newblock In {\em CVPR}, pages 1--9, 2015.

\bibitem{szegedy2015going}
Christian Szegedy, Wei Liu, Yangqing Jia, Pierre Sermanet, Scott Reed, Dragomir
  Anguelov, Dumitru Erhan, Vincent Vanhoucke, and Andrew Rabinovich.
\newblock Going deeper with convolutions.
\newblock In {\em CVPR}, pages 1--9, 2015.

\bibitem{te2018rgcnn}
Gusi Te, Wei Hu, Amin Zheng, and Zongming Guo.
\newblock Rgcnn: Regularized graph cnn for point cloud segmentation.
\newblock In {\em 2018 ACM Multimedia Conference on Multimedia Conference},
  pages 746--754. ACM, 2018.

\bibitem{Thomas_2019_KPConv}
Hugues Thomas, Charles~R. Qi, Jean-Emmanuel Deschaud, Beatriz Marcotegui,
  Francois Goulette, and Leonidas~J. Guibas.
\newblock Kpconv: Flexible and deformable convolution for point clouds.
\newblock In {\em ICCV}, October 2019.

\bibitem{Virtanen2019SciPy}
Pauli {Virtanen}, Ralf {Gommers}, Travis~E. {Oliphant}, Matt {Haberland}, Tyler
  {Reddy}, David {Cournapeau}, Evgeni {Burovski}, Pearu {Peterson}, Warren
  {Weckesser}, Jonathan {Bright}, St{\'e}fan~J. {van der Walt}, Matthew
  {Brett}, Joshua {Wilson}, K. {Jarrod Millman}, Nikolay {Mayorov}, Andrew
  R.~J. {Nelson}, Eric {Jones}, Robert {Kern}, Eric {Larson}, CJ {Carey},
  {\.I}lhan {Polat}, Yu {Feng}, Eric~W. {Moore}, Jake {Vand erPlas}, Denis
  {Laxalde}, Josef {Perktold}, Robert {Cimrman}, Ian {Henriksen}, E.~A.
  {Quintero}, Charles~R {Harris}, Anne~M. {Archibald}, Ant{\^o}nio~H.
  {Ribeiro}, Fabian {Pedregosa}, Paul {van Mulbregt}, and SciPy 1.~0
  {Contributors}.
\newblock {SciPy 1.0--Fundamental Algorithms for Scientific Computing in
  Python}.
\newblock {\em arXiv e-prints}, page arXiv:1907.10121, Jul 2019.

\bibitem{Wang_2019_Graph}
Lei Wang, Yuchun Huang, Yaolin Hou, Shenman Zhang, and Jie Shan.
\newblock Graph attention convolution for point cloud semantic segmentation.
\newblock In {\em CVPR}, June 2019.

\bibitem{wang2017cnn}
Peng-Shuai Wang, Yang Liu, Yu-Xiao Guo, Chun-Yu Sun, and Xin Tong.
\newblock O-cnn: Octree-based convolutional neural networks for 3d shape
  analysis.
\newblock {\em ACM Transactions on Graphics (TOG)}, 36(4):72, 2017.

\bibitem{wang2018sgpn}
Weiyue Wang, Ronald Yu, Qiangui Huang, and Ulrich Neumann.
\newblock Sgpn: Similarity group proposal network for 3d point cloud instance
  segmentation.
\newblock In {\em CVPR}, pages 2569--2578, 2018.

\bibitem{wang2019dynamic}
Yue Wang, Yongbin Sun, Ziwei Liu, Sanjay~E Sarma, Michael~M Bronstein, and
  Justin~M Solomon.
\newblock Dynamic graph cnn for learning on point clouds.
\newblock {\em ACM Transactions on Graphics (TOG)}, 38(5):146, 2019.

\bibitem{wolsey2014integer}
Laurence~A Wolsey and George~L Nemhauser.
\newblock {\em Integer and combinatorial optimization}.
\newblock John Wiley \& Sons, 2014.

\bibitem{wu2017squeezeseg}
Bichen Wu, Alvin Wan, Xiangyu Yue, and Kurt Keutzer.
\newblock Squeezeseg: Convolutional neural nets with recurrent crf for
  real-time road-object segmentation from 3d lidar point cloud.
\newblock {\em ICRA}, 2018.

\bibitem{wu2019squeezesegv2}
Bichen Wu, Xuanyu Zhou, Sicheng Zhao, Xiangyu Yue, and Kurt Keutzer.
\newblock Squeezesegv2: Improved model structure and unsupervised domain
  adaptation for road-object segmentation from a lidar point cloud.
\newblock In {\em ICRA}, 2019.

\bibitem{Wu_2019_PointConv}
Wenxuan Wu, Zhongang Qi, and Li Fuxin.
\newblock Pointconv: Deep convolutional networks on 3d point clouds.
\newblock In {\em CVPR}, June 2019.

\bibitem{xu2018spidercnn}
Yifan Xu, Tianqi Fan, Mingye Xu, Long Zeng, and Yu Qiao.
\newblock Spidercnn: Deep learning on point sets with parameterized
  convolutional filters.
\newblock In {\em Proceedings of the European Conference on Computer Vision
  (ECCV)}, pages 87--102, 2018.

\bibitem{yang2018foldingnet}
Yaoqing Yang, Chen Feng, Yiru Shen, and Dong Tian.
\newblock Foldingnet: Point cloud auto-encoder via deep grid deformation.
\newblock In {\em CVPR}, pages 206--215, 2018.

\bibitem{yi2016ShapeNet}
Li Yi, Vladimir~G Kim, Duygu Ceylan, I Shen, Mengyan Yan, Hao Su, Cewu Lu,
  Qixing Huang, Alla Sheffer, Leonidas Guibas, et~al.
\newblock A scalable active framework for region annotation in 3d shape
  collections.
\newblock {\em ACM Transactions on Graphics (TOG)}, 35(6):210, 2016.

\bibitem{yi2017syncspeccnn}
Li Yi, Hao Su, Xingwen Guo, and Leonidas~J Guibas.
\newblock Syncspeccnn: Synchronized spectral cnn for 3d shape segmentation.
\newblock In {\em CVPR}, pages 2282--2290, 2017.

\bibitem{Yu_2019_PartNet}
Fenggen Yu, Kun Liu, Yan Zhang, Chenyang Zhu, and Kai Xu.
\newblock Partnet: A recursive part decomposition network for fine-grained and
  hierarchical shape segmentation.
\newblock In {\em CVPR}, June 2019.

\bibitem{Zhang2019ShellNet}
Zhiyuan Zhang, Binh-Son Hua, and Sai-Kit Yeung.
\newblock Shellnet: Efficient point cloud convolutional neural networks using
  concentric shells statistics.
\newblock In {\em ICCV}, October 2019.

\bibitem{Zhao_2019_PointWeb}
Hengshuang Zhao, Li Jiang, Chi-Wing Fu, and Jiaya Jia.
\newblock Pointweb: Enhancing local neighborhood features for point cloud
  processing.
\newblock In {\em CVPR}, June 2019.

\end{thebibliography}
}

\end{document}